 \newcommand{\IGNORE}[1]{}
\newcommand{\R}{\mathbb{R}}
\newcommand{\N}{\mathbb{N}}
\newcommand*{\eqdef}{\stackrel{\textup{def}}{=}}
\newcommand{\tr}{\mathrm{tr}}
 \def\0{{\bf 0}}
\def\qed{\hfill\hbox{${\vcenter{\vbox{
    \hrule height 0.4pt\hbox{\vrule width 0.4pt height 6pt
    \kern5pt\vrule width 0.4pt}\hrule height 0.4pt}}}$}}
\definecolor{dkr}{rgb}{0.6,0.2,0.2}
\definecolor{dkg}{HTML}{00CC00}
\definecolor{dkb}{rgb}{0.0,0.3,0.7}
\definecolor{blue1}{HTML}{0066FF}
\definecolor{brm}{rgb}{1,0.0,1}
\definecolor{lpurple}{cmyk}{.05,0.18,0,0}
\definecolor{dred}{cmyk}{0,1,.81,.04}
\definecolor{lred}{cmyk}{0,0.26,.1,0}
\definecolor{dgreen}{cmyk}{1,0,.53,.21}
\definecolor{lgreen}{cmyk}{.38,0,.36,0}
\definecolor{dblue}{cmyk}{1,0.44,0,0}
\definecolor{lblue}{cmyk}{.25,0.02,0,0}
\def\Ebb{{\mathbb E}}
\def\beq{\begin{equation}}
\def\eeq{\end{equation}\noindent}
\def\beqn{\begin{eqnarray}}
\def\eeqn{\end{eqnarray} \noindent}
\def\beqnn{  \begin{eqnarray*}}
\def\eeqnn{\end{eqnarray*}  \noindent}
\def\bcase{  \begin{numcases}}
\def\ecase{\end{numcases}   \noindent}
\def\bsbcase{  \begin{subnumcases}}
\def\esbcase{\end{subnumcases}   \noindent}
\newtheorem{theorem}{Theorem}
\newtheorem{lemma}[theorem]{Lemma}
\newtheorem{proposition}[theorem]{Proposition}
\newtheorem{definition}{Definition}
\newenvironment{proof}{\noindent{\bf Proof}:}{\qed}
\newcommand{\matplottc}[1]{               
        \unitlength .45truein
        \begin{center}
        \includegraphics{#1.ps}
        \end{picture}
        \end{center}
}
\def\psfancypar#1#2{\begingroup\def\par{\endgraf\endgroup\lineskiplimit=0pt}
               \setbox2=\hbox{\large\sc #2}
               \newdimen\tmpht \tmpht \ht2 \advance\tmpht by \baselineskip
               \font\hhuge=Times-Bold at \tmpht
               \setbox1=\hbox{{\hhuge #1}}
               \count7=\tmpht \count8=\ht1
               \divide\count8 by 1000 \divide\count7 by \count8
               \tmpht=.001\tmpht\multiply\tmpht by \count7
               \font\hhuge=Times-Bold at \tmpht
               \setbox1=\hbox{{\hhuge #1}}
               \noindent
                \hangindent1.05\wd1
               \hangafter=-2 {\hskip-\hangindent
               \lower1\ht1\hbox{\raise1.0\ht2\copy1}%
                \kern-0\wd1}\copy2\lineskiplimit=-1000pt}
\def\Kout{\setbox1=\hbox{\Huge\bf K}\hbox to
1.05\wd1{\hspace{.05\wd1}
\def\Sout{\setbox1=\hbox{\Huge\bf S}\hbox to 1.05\wd1{\hspace{.05\wd1}



%
%

\newcommand{\torestate}[3]{%
\expandafter \def \csname BBRESTATE #2 \endcsname{#3}
\theoremstyle{plain}
\newtheorem{BBRESTATETHMNUM#2}[theorem]{#1}
\begin{BBRESTATETHMNUM#2}\label{#2}\csname BBRESTATE #2 \endcsname   \end{BBRESTATETHMNUM#2}
\newtheorem*{BBRESTATETHMNONNUM#2}{{#1}~\ref{#2}}
}

\newcommand{\restate}[1]{\begin{BBRESTATETHMNONNUM#1}[Restated] \csname BBRESTATE #1 \endcsname
\end{BBRESTATETHMNONNUM#1}}

\definecolor{blue1}{HTML}{0066FF}
\definecolor{lpurple}{cmyk}{.05,0.18,0,0}




\newcommand{\ux}{\underline{x}}
\newcommand{\uq}{\underline{q}}
\newcommand{\oq}{\overline{q}}
\newcommand{\uh}{\underline{h}}
\newcommand{\tq}{\tilde{q}}
\renewcommand{\tr}{\tilde{r}}

\newcommand{\Gauss}{\mathrm{Gauss}}
\newcommand{\Cauchy}{\mathrm{Cauchy}}

\title{On the effect of the activation function \\
       on the distribution of hidden nodes \\
       in a deep network \\}
\author{Philip M. Long\thanks{Authors ordered alphabetically.} 
        and 
        Hanie Sedghi\footnotemark[1] \\
        Google Brain \\}
\date{}

\begin{document}
\maketitle
\begin{abstract}
We analyze the joint probability distribution on the lengths of the
vectors of hidden variables in different layers of a fully connected
deep network, when the weights and biases are chosen randomly according to
Gaussian distributions, and the input is in $\{ -1, 1\}^N$.  We show
that, if the activation function $\phi$ satisfies a minimal set of
assumptions, satisfied by all activation functions that we know that
are used in practice, then, as the width of the network gets large,
the ``length process'' converges in probability to a length map
that is determined as a simple function of the variances of the
random weights and biases, and the activation function $\phi$.

We also show that this convergence may fail for $\phi$ that violate our assumptions.
\end{abstract}

\section{Introduction}

The size of the weights of a deep network must be managed delicately.  If they are too large,
signals blow up as they travel through the network, leading to numerical problems, and if they
are too small, the signals fade away.  The practical state of the art in deep learning made a
significant step forward due to schemes for initializing the weights that aimed in different ways at 
maintaining roughly the same scale for the hidden variables 
before and after a layer
\cite{lecun2012efficient,glorot2010understanding}.  
Later work \cite{he2015delving,poole2016exponential,daniely2016toward}
took into account the effect of the non-linearities on the length dynamics of a deep network,
informing initialization policies in a more refined way.

In this paper, we continue this line of work, theoretically analyzing
what might be called the ``length process''.  That is, for a given
input, chosen for simplicity from $\{ -1, 1 \}^N$, we study the
probability distribution over the lengths of the vectors of hidden
variables, when the parameters of a deep network are chosen randomly.
We analyze the case of fully connected networks, with the same
activation function $\phi$ at each hidden node and $N$ hidden
variables in each layer.  As in \cite{poole2016exponential}, we
consider the case where weights between nodes are chosen from a
zero-mean Gaussian with variance $\sigma_w^2/N$, and where the biases
are chosen from a zero-mean distribution with variance $\sigma_b^2$.

Our first result holds for activation functions $\phi$ that satisfy
the following properties: (a) the restriction of $\phi$ to any finite
interval is bounded; (b) as $z$ gets large,\footnote{Here
$o(z^2)$ denotes any function of $z$ that grows strictly more
slowly than $z^2$, such as $z^{2-\epsilon}$ for $\epsilon > 0$.}
$|\phi(z)| \leq \exp(o(z^2))$,
(c) $\phi$ is measurable.  We refer to
such $\phi$ as {\em permissible}.  Note that conditions (a) and (c)
both hold for any non-decreasing $\phi$.

We show that, for all permissible $\phi$ and all $\sigma_w$ and
$\sigma_b$, as $N$ gets large, the length process converges in
probability to a length map that is a simple function of $\phi$, $\sigma_w$ and $\sigma_b$.  This length map was first discovered
in
\cite{poole2016exponential}, 
where it was claimed that it holds for all $\phi$; 
it has since been used
in a number of other papers \cite{schoenholz2016deep, yang2017mean, pennington2017resurrecting, lee2018deep, xiao2018dynamical, chen2018dynamical, pennington2018emergence, hayou2018selection}.

In Section~\ref{s:diversity}, 
to motivate our new analysis, we provide examples of $\phi$ that are
not permissible that lead to length processes with arguably
surprising properties.  For example, we show that, for arbitrarily
small positive $\sigma_w$, even if $\sigma_b = 0$, for $\phi(z) =
1/z$, the distribution of values of each of the hidden nodes in the
second layer diverges as $N$ gets large.  For finite $N$, each node
has a Cauchy distribution, which already has infinite variance, and as
$N$ gets large, the scale parameter of the Cauchy distribution gets
larger, leading to divergence.  We also show that the hidden variables
in the second layer may not be independent, even for some permissible
$\phi$ like the ReLU.  The results of this section contradict claims
made in \cite{poole2016exponential}.

Section~\ref{s:experiments} describes some simulation experiments
verifying some of the findings of the paper, and illustrating the
dependence among the values of the hidden nodes.

Our analysis of the convergence of the length map borrows ideas from
Daniely, et al.\ 
\cite{daniely2016toward}, who studied the properties of the
mapping from inputs to hidden representations resulting from random
Gaussian initialization.  
Their theory applies in the case of activation functions with certain
smoothness properties, and to a wide variety of architectures.  
Our analysis treats a wider variety of values of $\sigma_w$ and
$\sigma_b$, and uses weaker assumptions on $\phi$.


\section{Preliminaries}

\subsection{Notation}
For $n \in \mathbb{N}$, we use $[n]$ to denote
the set $\lbrace 1, 2, \dotsc, n \rbrace$.  If $T$ is a
$n \times m \times p$ tensor, then, for $i \in [n]$, let
$T_{i,:,:} = \langle T_{i,j,k} \rangle_{jk}$, and define
$T_{i,j,:}$, etc., analogously.

\subsection{The finite case}
%
Consider a deep fully connected width-$N$ 
network with $D$ layers.  Let $W \in \R^{D \times N \times N}$.
An activation function $\phi$ maps $\R$ to $\R$;  we will also use
$\phi$ to denote the function from $\R^N$ to $\R^N$ obtained by applying $\phi$ componentwise.
Computation of the 
neural activity vectors $x_{0,:},...,x_{D,:} \in \R^N$
and preactivations $h_{1,:},...,h_{D,:} \in \R^N$ 
proceeds in the standard way as follows:
\begin{align*}
h_{\ell,:} = W_{\ell,:,:} x_{\ell-1,:}+ b_{\ell,:}
 \quad
 x_{\ell,:} = \phi(h_{\ell,:}), \quad  \quad \text{for}~~\ell = 1, \dotsc, D.
\end{align*}

We will study the process arising from fixing an arbitrary input
$x_{0,:} \in \{ -1, 1 \}^N$ and choosing the parameters independently
at random: the entries of $W$ are sampled from $\Gauss\left(0,
\frac{\sigma^2_w}{N}\right)$, and the entries of $b$ from
$\Gauss\left(0, \sigma^2_b\right)$.
For each $\ell \in [D]$, define
$q_{\ell} = \frac{1}{N} \sum_{i = 1}^N h_{\ell,i}^2$.

Note that for all $\ell \geq 1$, all the components of $h_{\ell,:}$ 
and $x_{\ell,:}$ are identically
distributed.

\subsection{The wide-network limit}

For the purpose of defining a limit, assume that,
for a fixed, arbitrary function $\chi : \N \rightarrow \{ -1, 1\}$,
for finite $N$, we have $x_{0,:} = (\chi(1),...,\chi(N))$.
For $\ell > 0$, if the limit exists 
(in the sense of ``convergence in distribution''),
let $\ux_\ell$ be a random variable whose distribution is the limit of
the distribution of $x_{\ell,1}$ as $N$ goes to infinity.  
Define $\uh_\ell$ and
$\uq_\ell$ similarly.

\subsection{Total variation distance}

If $P$ and $Q$ are probability distributions, then
$d_{TV}(P,Q) = \sup_E P(E) - Q(E)$, and if
$p$ and $q$ are their densities,
$d_{TV}(P,Q) = \frac{1}{2} \int | p(x) - q(x) | \; dx.$

\section{Convergence in probability}
\label{s:positive}

In this section we characterize the length map of the
hidden nodes of a deep network, for all activation functions
satisfying the following assumptions.
\begin{definition}
An activation function $\phi$ is {\em permissible} if,
(a) the restriction of $\phi$ to any finite interval is bounded;
(b) $|\phi(x)| \leq \exp(o(x^2))$ as $|x|$ gets large.\footnote{
This condition may be expanded as follows, 
$\mathrm{limsup}_{x \rightarrow \infty} \frac{\log | \phi(x) |}{x^2} = 0$
and 
$\mathrm{limsup}_{x \rightarrow -\infty} \frac{\log | \phi(x) |}{x^2} = 0$.};
and (c) $\phi$ is measurable.
\end{definition}

Conditions (b) and (c) ensure that a key
integral can be computed.  The proof 
of Lemma~\ref{l:integrable} is in Appendix~\ref{a:integrable}.
\begin{lemma}
\label{l:integrable}
If $\phi$ is permissible, then, for all positive constants $c$,
the function $g$ defined by $g(x) = \phi(c x)^2 \exp(-x^2/2)$ is
integrable.
\end{lemma}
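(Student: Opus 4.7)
The plan is to split the real line into a bounded region (where I use condition (a)) and its complement (where I use condition (b)), and to bound $g$ on each piece by a function that is known to be integrable. Measurability (c) ensures $g$ itself is measurable and the integral makes sense.

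First I would unpack the asymptotic condition. The footnote makes precise that $|\phi(x)| \leq \exp(o(x^2))$ means $\limsup_{x\to\pm\infty} \log|\phi(x)|/x^2 = 0$. So for every $\epsilon > 0$ there exists $M_\epsilon > 0$ with $|\phi(u)| \leq e^{\epsilon u^2}$ for all $|u| \geq M_\epsilon$. The key point is to pick $\epsilon$ small enough, relative to $c$, so that after squaring and multiplying by $\exp(-x^2/2)$, the exponent stays negative. Concretely, I choose $\epsilon = \tfrac{1}{8c^2}$, so that for $|cx| \geq M_\epsilon$,
\[
\phi(cx)^2 \exp(-x^2/2) \leq \exp\!\bigl(2\epsilon c^2 x^2 - x^2/2\bigr) = \exp\!\bigl(-x^2/4\bigr),
\]
which is integrable on the tails $\{x : |x| \geq M_\epsilon/c\}$ (bounded above by a Gaussian).

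For the complementary bounded region $|x| \leq M_\epsilon/c$, condition (a) says that the restriction of $\phi$ to the finite interval $[-M_\epsilon, M_\epsilon]$ is bounded, say by $B$. Then on this region $g(x) \leq B^2 \exp(-x^2/2)$, whose integral over a bounded set is finite. Measurability of $g$ follows from (c) together with the measurability of $x \mapsto cx$ and the continuous functions $t \mapsto t^2$ and $t \mapsto \exp(-t^2/2)$, so the two pieces combine to give $\int g(x)\,dx < \infty$.

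There is no real obstacle; the only delicate point is translating the informal $o(z^2)$ notation into the usable quantitative form "for all $\epsilon>0$, eventually $|\phi(u)|\leq e^{\epsilon u^2}$," and then remembering to pick $\epsilon$ small enough that the factor $c^2$ from the change of scale does not spoil the Gaussian decay. Everything else is a standard split-and-bound argument.
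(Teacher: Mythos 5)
Your proof is correct and follows essentially the same route as the paper's: both split $\R$ into a bounded interval, where condition (a) gives a uniform bound, and the tails, where the $\limsup$ form of condition (b) yields $\phi(cx)^2\exp(-x^2/2)\le\exp(-x^2/4)$. The only cosmetic difference is that the paper transfers the $\limsup$ condition to the function $x\mapsto\phi(cx)$ directly, while you absorb the scaling factor $c^2$ by choosing $\epsilon=\tfrac{1}{8c^2}$; these are equivalent.
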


Now, we recall the definition of
a length map from \cite{poole2016exponential};
we will prove that the the length process converges to
this length map.
Define $\tq_0,...,\tq_D$ 
and $\tr_0,...,\tr_D$ 
recursively as follows.  First $\tq_0 = \tr_0 = 1$.
Then, for $\ell > 0$,
\[
\tq_{\ell}
  = \sigma_w^2 \tr_{\ell-1} + \sigma_b^2
\]
and
\[
\tr_{\ell} = \Ebb_{z \in \Gauss(0,1)} [ \phi (\sqrt{ \tq_{\ell}} z)^2 ].
\]
If $\phi$ is permissible, then, since $\phi (c z)^2 \exp(-z^2/2)$ is
integrable for all $c$, we have that
$\tq_0,...,\tq_D,\tr_0,...,\tr_D$ are well-defined
finite real numbers.  

The following theorem shows that the
length map $q_0,...,q_D$ converges in probability
to $\tq_0,...,\tq_D$.
\begin{theorem}
\label{t:conv.prob}
For any permissible $\phi$, 
$\sigma_w, \sigma_b \geq 0$, any depth $D$, and any $\epsilon,
\delta > 0$, there is an $N_0$ such that, for all $N \geq N_0$, 
with probability $1 - \delta$, for
all $\ell \in \{ 0,...,D \}$, we have
$| q_{\ell} - \tq_{\ell} | \leq \epsilon.$
\end{theorem}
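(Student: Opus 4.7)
The plan is to argue by induction on the layer $\ell$, maintaining the joint invariant that, with high probability, both $q_\ell$ is close to $\tq_\ell$ \emph{and} the squared norm $r_\ell := \frac{1}{N}\sum_{i=1}^N \phi(h_{\ell,i})^2 = \frac{1}{N}\|x_{\ell,:}\|^2$ is close to $\tr_\ell$. The base case is immediate, since $x_{0,:} \in \{-1,1\}^N$ forces $r_0 = 1 = \tr_0$ exactly. A union bound over the $D$ layers at the end converts the per-layer probability bounds into the uniform statement with total failure probability at most $\delta$.

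The inductive step splits into two pieces, both of which condition on $x_{\ell-1,:}$ and use the fact that, under this conditioning, the coordinates $h_{\ell,i} = \sum_j W_{\ell,i,j} x_{\ell-1,j} + b_{\ell,i}$ are i.i.d.\ Gaussians with mean zero and common variance $v := \sigma_w^2 r_{\ell-1} + \sigma_b^2$ (using $\|x_{\ell-1,:}\|^2 / N = r_{\ell-1}$ and $W_{\ell,i,j} \sim \Gauss(0,\sigma_w^2/N)$). \textbf{(i)} For $q_\ell$: the conditional mean is $v$ and the conditional variance is $2v^2/N$, so Chebyshev's inequality gives $q_\ell \approx v$ with high probability; the inductive event $r_{\ell-1} \approx \tr_{\ell-1}$ makes $v \approx \sigma_w^2 \tr_{\ell-1} + \sigma_b^2 = \tq_\ell$, so $q_\ell \approx \tq_\ell$.

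\textbf{(ii)} For $r_\ell$: the terms $\phi(h_{\ell,i})^2$ are conditionally i.i.d.\ with mean $m(v) := \Ebb_{z \sim \Gauss(0,1)}[\phi(\sqrt{v}z)^2]$ and variance bounded by $\Ebb[\phi(\sqrt{v}z)^4]$. Lemma~\ref{l:integrable} ensures $m(v) < \infty$, and the same argument applied to $\phi^2$ in place of $\phi$ (still permissible, since $|\phi|^2$ inherits the $\exp(o(x^2))$ tail bound) yields finiteness of the fourth moment, so the conditional variance of $r_\ell$ is $O(1/N)$; Chebyshev then gives $r_\ell \approx m(v)$. Continuity of $m$ at $\tq_\ell$ then converts this to $r_\ell \approx m(\tq_\ell) = \tr_\ell$ on the inductive event.

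The main obstacle is arranging (ii) uniformly in $v$ over a neighborhood of $\tq_\ell$: since $\phi$ is only assumed measurable we cannot invoke pointwise continuity, and must exhibit a single integrable majorant valid for all $v$ in that neighborhood. The $|\phi(x)| \leq \exp(o(x^2))$ tail bound is precisely what allows this: it lets us pick $\eta > 0$ as small as desired and a constant $M_\eta$ so that $\phi(\sqrt{v}z)^4 \leq M_\eta + \exp(\eta v z^2)$, and choosing a compact interval $[v_1,v_2] \ni \tq_\ell$ with $\eta v_2 < 1/2$ makes $\phi(\sqrt{v}z)^k e^{-z^2/2}$ for $k \in \{2,4\}$ uniformly dominated by an integrable function of $z$ alone. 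This simultaneously supplies the uniform fourth-moment bound needed to make Chebyshev quantitative and, via dominated convergence, gives the continuity of $m$ at $\tq_\ell$, closing the induction.
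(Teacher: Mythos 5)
Your induction skeleton---the joint invariant on $q_\ell$ and $r_\ell$, the conditioning on $x_{\ell-1,:}$ that makes $h_{\ell,1},\dots,h_{\ell,N}$ i.i.d.\ $\Gauss(0,\sigma_w^2 r_{\ell-1}+\sigma_b^2)$, and the final union bound over layers---matches the paper's proof. Your substitution of Chebyshev with a uniform fourth-moment bound for the paper's (non-quantitative) weak law of large numbers is legitimate and arguably cleaner on the uniformity-over-conditioning issue: $\phi^2$ is indeed permissible whenever $\phi$ is, so the argument of Lemma~\ref{l:integrable} gives finiteness of $\Ebb[\phi(\sqrt{v}z)^4]$, and your majorant makes it uniform over a compact interval of variances. (You should, as the paper does via Lemma~\ref{l:non_zero}, dispose separately of the degenerate cases $\sigma_w=0$ and $\phi=0$ a.e., since otherwise $\tq_\ell$ can vanish and no interval $[v_1,v_2]$ with $v_1>0$ contains it.)

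The genuine gap is the claim that dominated convergence applied to $\phi(\sqrt{v}z)^2 e^{-z^2/2}$ \emph{in the $z$ variable} gives continuity of $m(v)=\Ebb_{z\sim\Gauss(0,1)}[\phi(\sqrt{v}z)^2]$ at $\tq_\ell$. DCT requires a.e.\ pointwise convergence $\phi(\sqrt{v}z)^2\to\phi(\sqrt{\tq_\ell}\,z)^2$ as $v\to\tq_\ell$, and a permissible $\phi$ is only measurable: if $\phi$ is the indicator of a fat Cantor set (bounded, measurable, discontinuous on a set of positive measure), this pointwise convergence fails on a set of $z$ of positive measure, and no integrable majorant rescues the step. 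Handling merely measurable $\phi$ is the whole point of the theorem, so this cannot be waved away. The repair is to move the $v$-dependence out of $\phi$ by the change of variables $u=\sqrt{v}z$, writing $m(v)=\int \phi(u)^2 (2\pi v)^{-1/2}\exp(-u^2/(2v))\,du$; now the integrand is continuous in $v$ for each fixed $u$ and is dominated on $[v_1,v_2]$ by $(2\pi v_1)^{-1/2}\phi(u)^2\exp(-u^2/(2v_2))$, which is integrable by Lemma~\ref{l:integrable}. This is essentially what the paper accomplishes by a different route: it truncates to $[-a,a]$ using the uniform tail bound of Lemma~\ref{l:tail} and compares the two Gaussians on the compact part in total variation (Lemma~\ref{l:gauss}), where only boundedness of $\phi$ is needed. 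With that one substitution your argument closes.
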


The rest of this section is devoted to proving Theorem~\ref{t:conv.prob}.
Our proof will use the weak law of large numbers.
\begin{lemma}[\cite{feller2008introduction}]
\label{l:weak.law}
For any random variable $X$ with a finite expectation, and any
$\epsilon,\delta > 0$, there is an $N_0$ such that, for all $N \geq
N_0$, if $X_1,...,X_N$ are i.i.d.\ with the same distribution as $X$,
then
\[
\Pr\left( \left| \Ebb[X] - \frac{1}{N} \sum_{i=1}^N X_i \right| > \epsilon\right) \leq \delta.
\]
\end{lemma}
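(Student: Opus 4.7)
Because only $\Ebb|X| < \infty$ is assumed (not finite variance), Chebyshev's inequality cannot be applied directly to $X$. The plan is the classical truncation argument. Fix $\epsilon,\delta > 0$ and, for a threshold $T > 0$ to be chosen below, define
\[
Y_i \defeq X_i \mathbf{1}\bigl[|X_i| \leq T\bigr], \qquad \bar{X}_N \defeq \frac{1}{N}\sum_{i=1}^N X_i, \qquad \bar{Y}_N \defeq \frac{1}{N}\sum_{i=1}^N Y_i.
\]
The triangle inequality gives
\[
|\bar{X}_N - \Ebb[X]| \leq |\bar{X}_N - \bar{Y}_N| + |\bar{Y}_N - \Ebb[Y_1]| + |\Ebb[Y_1] - \Ebb[X]|,
\]
so it suffices to make each right-hand term at most $\epsilon/3$ with high probability.

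The bias term satisfies $|\Ebb[Y_1] - \Ebb[X]| \leq \Ebb\bigl[|X|\,\mathbf{1}[|X|>T]\bigr]$, which tends to $0$ as $T \to \infty$ by dominated convergence (with dominating integrand $|X| \in L^1$). For the truncation-discrepancy term, Markov's inequality gives
\[
\Pr\bigl(|\bar{X}_N - \bar{Y}_N| > \epsilon/3\bigr) \leq \frac{3}{\epsilon}\,\Ebb|\bar{X}_N - \bar{Y}_N| \leq \frac{3}{\epsilon}\,\Ebb\bigl[|X|\,\mathbf{1}[|X|>T]\bigr],
\]
which is controlled by the same tail quantity, and crucially is independent of $N$. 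For the concentration of the truncated mean, since $|Y_i| \leq T$ we have $\Var(Y_1) \leq \Ebb[Y_1^2] \leq T\,\Ebb|X|$, so Chebyshev yields
\[
\Pr\bigl(|\bar{Y}_N - \Ebb[Y_1]| > \epsilon/3\bigr) \leq \frac{9\,T\,\Ebb|X|}{N\,\epsilon^2}.
\]

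To assemble the bound, I would first choose $T$ large enough that both $|\Ebb[Y_1] - \Ebb[X]| \leq \epsilon/3$ and $\tfrac{3}{\epsilon}\Ebb[|X|\mathbf{1}[|X|>T]] \leq \delta/2$; then, with $T$ fixed, choose $N_0$ large enough that $9 T \Ebb|X| / (N_0 \epsilon^2) \leq \delta/2$. A union bound over the two stochastic events gives the claim for all $N \geq N_0$. The main subtlety is that the truncation level $T$ must depend on $(\epsilon,\delta)$ but not on $N$; otherwise the variance bound $T\,\Ebb|X|/N$ used in Chebyshev would not shrink. Integrability of $|X|$ enters only through the dominated convergence step, which is exactly why the hypothesis ``finite expectation'' is both what is used and what is needed for this form of the statement.
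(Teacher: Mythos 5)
Your proof is correct. The paper does not prove this lemma at all---it simply cites Feller's textbook---so there is no in-paper argument to compare against; your truncation-plus-Chebyshev derivation is the standard proof of the $L^1$ weak law, and all three pieces (the deterministic bias bound via dominated convergence, the Markov bound on the truncation discrepancy, and the Chebyshev bound on the truncated average) are assembled correctly, with the key point---that $T$ is fixed as a function of $(\epsilon,\delta)$ before $N_0$ is chosen---handled explicitly.
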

In order to divide our analysis into cases, we need
the following lemma, whose proof is in Appendix~\ref{a:non_zero}.
\begin{lemma}
\label{l:non_zero}
If $\phi$
is permissible and not zero a.e.,
for all
$\sigma_w > 0$, for all $\ell \in \{ 0,...,D \}$,
$\tq_{\ell} > 0$
and $\tr_{\ell} > 0$.
\end{lemma}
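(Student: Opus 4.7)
The plan is to prove the lemma by induction on $\ell$, using the recursive definition of $\tq_\ell$ and $\tr_\ell$. The base case is immediate since $\tq_0 = \tr_0 = 1 > 0$ by definition. For the inductive step, I would assume $\tq_{\ell-1} > 0$ and $\tr_{\ell-1} > 0$ and derive the same for index $\ell$.

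The step from $\tr_{\ell-1}$ to $\tq_\ell$ is essentially trivial: since $\sigma_w > 0$ by hypothesis, $\sigma_b^2 \geq 0$, and $\tr_{\ell-1} > 0$ by the inductive hypothesis, we immediately obtain $\tq_\ell = \sigma_w^2 \tr_{\ell-1} + \sigma_b^2 > 0$. So the only substantive part is showing $\tr_\ell > 0$ given $\tq_\ell > 0$.

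For that step, I would set $c = \sqrt{\tq_\ell} > 0$ and write
\[
\tr_\ell = \frac{1}{\sqrt{2\pi}} \int_{-\infty}^{\infty} \phi(cz)^2 \exp(-z^2/2)\,dz.
\]
By Lemma~\ref{l:integrable}, this integral is finite, and its integrand is non-negative. A non-negative integrable function has zero integral if and only if it vanishes almost everywhere (with respect to Lebesgue measure). Since $\exp(-z^2/2) > 0$ for every $z \in \R$, vanishing a.e.\ of the integrand is equivalent to $\phi(cz) = 0$ for a.e.\ $z$. Applying the change of variables $w = cz$ (valid because $c > 0$, and preserving Lebesgue null sets up to a positive scalar), this is equivalent to $\phi(w) = 0$ for a.e.\ $w$, which contradicts the hypothesis that $\phi$ is not zero almost everywhere. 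Hence $\tr_\ell > 0$, closing the induction.

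I do not expect any serious obstacle here; the only small care needed is to verify that the change of variables indeed transfers the ``zero a.e.'' property from $\phi(c\,\cdot)$ to $\phi$, which is standard because multiplication by a nonzero constant maps Lebesgue null sets to Lebesgue null sets. The measurability clause in the definition of ``permissible'' ensures that all integrals and a.e.\ statements are well-defined.
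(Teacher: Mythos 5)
Your proof is correct and follows essentially the same route as the paper's: induction on $\ell$, with the immediate step $\tq_\ell = \sigma_w^2\tr_{\ell-1}+\sigma_b^2>0$ and the substantive step being positivity of $\Ebb_{z\sim\Gauss(0,1)}[\phi(\sqrt{\tq_\ell}\,z)^2]$ for $\phi$ not zero a.e. The only cosmetic difference is that you invoke the standard fact that a non-negative integrable function with zero integral vanishes a.e., whereas the paper proves the equivalent contrapositive directly by exhibiting an explicit positive lower bound of the form $\frac{1}{n}e^{-n^2/2}\,\mu(\{x:\phi^2(cx)>1/n\}\cap[-n,n])$.
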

We will also need a lemma that shows that
small changes in $\sigma$ lead to small changes in
$\Gauss(0,\sigma^2)$.
\begin{lemma}[see \cite{klartag2007central}]
\label{l:gauss}
There is an absolute constant $C$ such that,
for all $\sigma_1, \sigma_2 > 0$, \\
$d_{TV}(\Gauss(0,\sigma_1^2), \Gauss(0,\sigma_2^2))
 \leq C \frac{| \sigma_1 - \sigma_2 | }{\sigma_1}$.
\end{lemma}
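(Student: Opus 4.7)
My plan is to route through the Hellinger distance, which for two centered Gaussians admits a clean closed form. I will use the standard inequality $d_{TV}(P,Q) \leq \sqrt{2}\, H(P,Q)$, where $H^2(P,Q) = 1 - \int \sqrt{p(x) q(x)}\, dx$; it therefore suffices to bound the squared Hellinger distance by a constant times $(\sigma_1 - \sigma_2)^2/\sigma_1^2$.

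The first step is to evaluate the Bhattacharyya integral $\int \sqrt{p_1(x) p_2(x)}\, dx$ for the Gaussian densities $p_i(x) = (2\pi\sigma_i^2)^{-1/2}\exp(-x^2/(2\sigma_i^2))$. The product inside the square root is a Gaussian kernel with exponent $-\frac{x^2(\sigma_1^2+\sigma_2^2)}{4\sigma_1^2\sigma_2^2}$; integrating it and collecting normalizing constants yields the well-known identity
\[
\int \sqrt{p_1(x) p_2(x)}\, dx \;=\; \sqrt{\frac{2\sigma_1\sigma_2}{\sigma_1^2+\sigma_2^2}}.
\]

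The second step is algebraic. The identity $\frac{2\sigma_1\sigma_2}{\sigma_1^2+\sigma_2^2} = 1 - \frac{(\sigma_1-\sigma_2)^2}{\sigma_1^2+\sigma_2^2}$ puts the squared Hellinger distance in the form $1 - \sqrt{1-y}$ with $y = (\sigma_1-\sigma_2)^2/(\sigma_1^2+\sigma_2^2) \in [0,1]$. The elementary inequality $\sqrt{1-y} \geq 1-y$ on $[0,1]$ then gives $H^2 \leq y$, and dropping the $\sigma_2^2$ from the denominator produces $H^2 \leq (\sigma_1-\sigma_2)^2/\sigma_1^2$. Combined with the TV--Hellinger inequality, this yields $d_{TV}(\Gauss(0,\sigma_1^2),\Gauss(0,\sigma_2^2)) \leq \sqrt{2}\,|\sigma_1-\sigma_2|/\sigma_1$, establishing the lemma with $C = \sqrt{2}$; notably, no case split between $\sigma_1 \leq \sigma_2$ and $\sigma_1 > \sigma_2$ is needed.

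I do not expect a serious obstacle. The only spot requiring care is the Bhattacharyya integral and the bookkeeping of normalizing constants; the remainder of the argument is a one-line algebraic identity followed by a one-line scalar inequality.
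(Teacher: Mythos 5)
Your proof is correct, and it is worth noting that the paper itself does not prove this lemma at all: it is imported by citation from Klartag's paper on the central limit theorem for convex sets, where it appears as a statement about total variation between Gaussians. What you have done differently is supply a short, self-contained, elementary derivation with an explicit constant. Each step checks out: the Bhattacharyya integral $\int \sqrt{p_1 p_2}\,dx = \sqrt{2\sigma_1\sigma_2/(\sigma_1^2+\sigma_2^2)}$ is the standard closed form for centered Gaussians; the identity $2\sigma_1\sigma_2/(\sigma_1^2+\sigma_2^2) = 1 - (\sigma_1-\sigma_2)^2/(\sigma_1^2+\sigma_2^2)$ is correct; $\sqrt{1-y}\geq 1-y$ on $[0,1]$ gives $H^2 \leq y$; and with the convention $H^2 = 1 - \int\sqrt{p_1p_2}$ the Le Cam bound $d_{TV} \leq H\sqrt{2-H^2} \leq \sqrt{2}\,H$ is the right one, yielding $C=\sqrt{2}$. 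You are also right that no case split on the ordering of $\sigma_1,\sigma_2$ is needed, since $(\sigma_1-\sigma_2)^2/(\sigma_1^2+\sigma_2^2) \leq (\sigma_1-\sigma_2)^2/\sigma_1^2$ holds unconditionally; this matters because the stated bound is asymmetric in $\sigma_1$ and $\sigma_2$. What your approach buys is that the paper no longer needs to lean on an external reference for a one-page fact, and the explicit constant could be propagated through the $\epsilon'$ bookkeeping in the proof of Theorem~\ref{t:conv.prob} if one wanted quantitative rates; what the citation buys the authors is brevity and a pointer to sharper two-sided estimates. The only cosmetic nitpick is that you should state the Hellinger convention you are using ($H^2 = 1 - \int\sqrt{p_1p_2}$, i.e.\ $H^2 = \frac{1}{2}\int(\sqrt{p_1}-\sqrt{p_2})^2$), since the factor of $\sqrt{2}$ in the TV--Hellinger inequality depends on it.
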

The following technical lemma, which shows that tail bounds hold
uniformly over different choices of $q$, is proved in
Appendix~\ref{a:tail}.
\begin{lemma}
\label{l:tail}
If $\phi$ is permissible, 
for all $0 < r \leq s$, for all $\beta > 0$, there is an
$a \geq 0$ such that, for all $q \in [r,s]$,
$
\int_a^{\infty} \phi(\sqrt{q} z)^2 \exp(-z^2/2) \;dz \leq \beta
$
and 
$
\int_{-\infty}^{-a} \phi(\sqrt{q} z)^2 \exp(-z^2/2) \;dz \leq \beta.
$
\end{lemma}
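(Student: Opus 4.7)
The plan is to exploit the growth condition (b) from the definition of permissibility, which states $|\phi(x)| \leq \exp(o(x^2))$, in order to dominate $\phi(\sqrt{q}z)^2 \exp(-z^2/2)$ by a single Gaussian-decay function independent of $q \in [r,s]$. Once such a uniform dominating function is in place, picking $a$ large enough to make its tail integral smaller than $\beta$ is immediate.

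Concretely, I would first unpack condition (b) as follows: for every $\eta > 0$ there exists $M_\eta \geq 0$ such that $\log|\phi(x)| \leq \eta x^2$ whenever $|x| \geq M_\eta$, equivalently $\phi(x)^2 \leq \exp(2\eta x^2)$ on the same region. I would then specialize $\eta \defeq 1/(8s)$ and set $M \defeq M_\eta$. For any $q \in [r,s]$ and any $z$ with $|z| \geq M/\sqrt{r}$, we have $\sqrt{q}\,|z| \geq \sqrt{r}\,|z| \geq M$, so condition (b) applies and yields
\[
\phi(\sqrt{q}\,z)^2 \leq \exp\bigl(2\eta q z^2\bigr) \leq \exp\bigl(2\eta s z^2\bigr) = \exp(z^2/4).
\]
Multiplying by the Gaussian weight gives the uniform bound
\[
\phi(\sqrt{q}\,z)^2 \exp(-z^2/2) \leq \exp(-z^2/4) \qquad \text{for all } q \in [r,s], \; |z| \geq M/\sqrt{r}.
\]

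With this pointwise envelope in hand, I would choose $a \geq M/\sqrt{r}$ large enough that $\int_a^{\infty} \exp(-z^2/4)\,dz \leq \beta$; this is possible because $\exp(-z^2/4)$ is integrable on $\mathbb{R}$, so its one-sided tail tends to $0$. Then for every $q \in [r,s]$,
\[
\int_a^{\infty} \phi(\sqrt{q}\,z)^2 \exp(-z^2/2) \,dz \leq \int_a^{\infty} \exp(-z^2/4) \,dz \leq \beta,
\]
which is the desired inequality. The lower-tail statement follows by the identical argument applied to $z \leq -a$, using the other half of condition (b). Note that measurability (condition (c)) is implicitly used so that these Lebesgue integrals are well-defined, and the boundedness condition (a) is not needed for the tails but would be needed (as in Lemma~\ref{l:integrable}) if one also wanted to control the integrand on a compact interval.

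The main conceptual obstacle is ensuring \emph{uniformity} in $q$: condition (b) provides a threshold $M_\eta$ that depends only on $\phi$ and $\eta$, but the natural threshold on $z$ is $M_\eta/\sqrt{q}$, which varies with $q$. Using the compactness of $[r,s]$ to pick the worst-case threshold $M/\sqrt{r}$, and simultaneously the worst-case coefficient $2\eta s$ in the exponent, is exactly what lets a single $a$ work for all $q$ at once. No other subtlety is anticipated.
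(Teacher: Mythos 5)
Your proposal is correct and follows essentially the same route as the paper: both extract from condition (b) a threshold beyond which $\phi(x)^2 \leq \exp(x^2/(4s))$, use $q \geq r$ to convert that into a single cutoff $a$ valid for all $q \in [r,s]$, and then dominate the integrand by the integrable envelope $\exp(-z^2/4)$. The only difference is cosmetic --- you bound the integrand directly in the $z$ variable, whereas the paper first performs a change of variables --- and your version is, if anything, slightly cleaner on the uniformity-in-$q$ point.
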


Armed with these lemmas, we are ready to
prove Theorem~\ref{t:conv.prob}.  

First, if $\phi$ is zero a.e., 
or if $\sigma_w = 0$, 
Theorem~\ref{t:conv.prob}
follows directly from Lemma~\ref{l:weak.law}, 
together
with a union bound over the layers.  Assume for the
rest of the proof that $\phi(x)$ is 
non-zero on a set of positive measure,
and that $\sigma_w > 0$, so that
$\tq_{\ell} > 0$
and $\tr_{\ell} > 0$ for all $\ell$.  

For each $\ell \in [D]$, define
$
r_{\ell} = \frac{1}{N} \sum_{i=1}^N x_{\ell,i}^2.
$

Our proof of Theorem~\ref{t:conv.prob} is by induction.  
The inductive hypothesis is 
that, for any $\epsilon, \delta > 0$ 
there is an $N_0$ such that, if $N \geq N_0$, then,
with probability $1 - \delta$, for all $\ell' \leq \ell$,
$| q_{\ell'} - \tq_{\ell'} | \leq \epsilon$
and $| r_{\ell'} - \tr_{\ell'} | \leq \epsilon$.

The base case holds because $q_0 = \tq_0 = r_0 = \tr_0 = 1$, no matter
what the value of $N$ is.

Now for the induction step; choose $\ell > 0$,
$0 < \epsilon < \min \{ \tq_{\ell}/4, \tr_{\ell} \}$ and $0 < \delta \leq 1/2$.  
(Note that these choices are without loss of generality.)
Let $\epsilon' \in (0, \epsilon)$ take a value that will be
described later, using quantities from the analysis.
By the inductive hypothesis, whatever the value
of $\epsilon'$,
there is an $N_0'$ such that, if $N \geq N_0'$, then,
with probability $1 - \delta/2$, 
for all $\ell' \leq \ell - 1$,
we have
$| q_{\ell'} - \tq_{\ell'} | \leq \epsilon'$
and $| r_{\ell'} - \tr_{\ell'} | \leq \epsilon'$.
Thus, to
establish the inductive step, it suffices to show
that, after conditioning on the random choices before
the $\ell$th layer, if 
$| q_{\ell-1} - \tq_{\ell-1} | \leq \epsilon'$,
and $| r_{\ell-1} - \tr_{\ell-1} | \leq \epsilon'$,
there is an $N_{\ell}$ such that, if $N \geq N_{\ell}$,
then with
probability at least $1 - \delta/2$ with respect only
to the random choices of $W_{\ell,:,:}$ and $b_{\ell,:}$, 
that $| q_{\ell} - \tq_{\ell} | \leq \epsilon$
and $| r_{\ell} - \tr_{\ell} | \leq \epsilon$.  Given such an $N_{\ell}$, the inductive step
can be satisfied by letting $N_0$ be the maximum of
$N_0'$ and $N_{\ell}$.

Let us do that.
To simplify the notation, for the rest of the proof of the inductive step,
let us condition on outcomes of the layers before
layer $\ell$; all expectations and probabilities will 
concern the randomness only in the $\ell$th layer.  
Let us further assume that
$| q_{\ell-1} - \tq_{\ell-1} | \leq \epsilon'$
and $| r_{\ell-1} - \tr_{\ell-1} | \leq \epsilon'$.

Recall
that 
$q_{\ell} = \frac{1}{N} \sum_{i=1}^N h_{\ell,i}^2$.
Since 
the values of
$h_{\ell-1,1}, ... ,h_{\ell-1,N}$ have been fixed by conditioning, each
component of $h_{\ell,i}$ is obtained by
taking the dot-product of $x_{\ell-1,:} = \phi(h_{\ell-1,:})$
with $W_{\ell,i,:}$ and adding
an independent $b_{\ell,i}$.  Thus, conditioned on
$h_{\ell-1,1}, ... ,h_{\ell-1,N},$ we have that
$h_{\ell,1}, ... ,h_{\ell,N}$ are independent.
Also, since $x_{\ell-1,:}$ is fixed by conditioning, each $h_{\ell,i}$
has an identical Gaussian distribution.  

Since each component of $W$ and $b$ has zero
mean, each $h_{\ell,i}$ has zero mean.  

Choose an arbitrary $i \in [N]$.
Since $x_{\ell-1,:}$ is fixed by conditioning and
$W_{\ell,i,1},...,W_{\ell,i,N}$ and $b_{\ell,i}$ are independent,
\begin{equation}
\label{e:by.x}
\Ebb[q_{\ell} ]
  = \Ebb[h_{\ell,i}^2]
  = \sigma_b^2 + \frac{\sigma_w^2}{N} \sum_j x_{\ell-1,j}^2
  = \sigma_b^2 + \sigma_w^2 r_{\ell-1}
  \eqdef \oq_{\ell}.
\end{equation}
We wish to emphasize the $\oq_{\ell}$ is determined as a function
of random outcomes before the $\ell$th layer, and thus
a fixed, nonrandom quantity, regarding the randomization
of the $\ell$th layer.  By the inductive hypothesis, we have
\begin{equation}
\label{e:q.close.suffices}
| \Ebb[q_{\ell}] - \tq_{\ell} |
 = | \Ebb[h_{\ell,i}^2] - \tq_{\ell} |
 = | \oq_{\ell} - \tq_{\ell} |
 = \sigma_w^2 | r_{\ell-1} - \tr_{\ell-1} |
  \leq \epsilon' \sigma_w^2.
\end{equation}
The key consequence of this might be paraphrased by saying
that, to establish the portion of the inductive step regarding
$q_{\ell}$, it suffices for $q_{\ell}$ to be close to
its mean.  Now, we want to prove something similar for
$r_{\ell}$.  
We have
\begin{align*}
\Ebb[r_{\ell}] 
 = \frac{1}{N} \sum_{i=1}^N \Ebb[x_{\ell,i}^2] 
 = \frac{1}{N} \sum_{i=1}^N \Ebb[\phi(h_{\ell,i})^2] 
 = \Ebb[\phi(h_{\ell,1})^2],
\end{align*}
since, recalling that we have conditioned on previous layers,
$h_{\ell,1},...,h_{\ell,N}$ are i.i.d.  
Since $h_{\ell,i} \sim \Gauss(0, \oq_{\ell})$, we have
\begin{align*}
\Ebb[r_{\ell}] 
 & = \Ebb_{z \sim \Gauss(0,\oq_{\ell})}  [\phi(z)^2] 
 = \Ebb_{z \sim \Gauss(0,1)}  [\phi(\sqrt{\oq_{\ell}} z)^2] 
= \sqrt{\frac{1}{2 \pi}} \int \phi(\sqrt{\oq_{\ell}} z)^2 \exp(-z^2/2) \;dz. \\
\end{align*}
which gives
\begin{align*}
| \Ebb[r_{\ell}] - \tr_{\ell} |
  & \leq \left| \Ebb_{z \sim \Gauss(0,\oq_{\ell})} [\phi(z)^2]
           - \Ebb_{z \sim \Gauss(0,\tq_{\ell})} [\phi(z)^2]
       \right|.
\end{align*}
Since $| \oq_{\ell} - \tq_{\ell} | \leq \epsilon' \sigma_w^2$
and we may choose $\epsilon'$ to ensure
$\epsilon' \leq \frac{\tq_{\ell}}{2 \sigma_w^2}$,
we have
$
\tq_{\ell}/2 \leq \oq_{\ell} \leq 2 \tq_{\ell}.
$

For $\beta > 0$ and $\kappa \in (0,1/2)$ to be named later,
by Lemma~\ref{l:tail}, we can choose $a$ such that,
for all $q \in [\tq_{\ell}/2,2 \tq_{\ell}]$,
\begin{align*}
& \int_{-\infty}^{-a} \phi(\sqrt{q} z)^2 \exp(-z^2/2) \;dz
   \leq \beta/2 
\;\;\;\mbox{ and }\;\;\;
\int_{a}^{\infty} \phi(\sqrt{q} z)^2 \exp(-z^2/2) \;dz
   \leq \beta/2 \\
\end{align*}
and
$
\frac{1}{\sqrt{2 \pi q}}
 \int_{-a}^a \exp\left(-\frac{z^2}{2 q}\right) \;dz
 \geq 1 - \kappa.
$
Choose such an $a$.

We claim that
$
\left|  \int_{-a}^a \phi(\sqrt{q} z)^2 \exp(-z^2/2) \;dz
   -    \int \phi(\sqrt{q} z)^2 \exp(-z^2/2) \;dz \right|
    \leq \beta
$
for all $\tq_{\ell}/2 < q \leq 2 \tq_{\ell}$.  
Choose such a $q$.  We have
\begin{align*}
& \left|  \int_{-a}^a \phi(\sqrt{q} z)^2 \exp(-z^2/2) \;dz
   -    \int \phi(\sqrt{q} z)^2 \exp(-z^2/2) \;dz \right| \\
& =  \int_{-\infty}^{-a} \phi(\sqrt{q} z)^2 \exp(-z^2/2) \;dz
   +    \int_a^{\infty} \phi(\sqrt{q} z)^2 \exp(-z^2/2) \;dz  \\
& \leq  
   2 \max \left\{ \int_{-\infty}^{-a} \phi(\sqrt{q} z)^2 \exp(-z^2/2) \;dz,
        \int_a^{\infty} \phi(\sqrt{q} z)^2 \exp(-z^2/2) \;dz \right\} \\
& \leq  
   \beta. \\
\end{align*}
So now we are trying to bound
$
\left|  \int_{-a}^a \phi(\sqrt{\oq_{\ell}} z)^2 \exp(-z^2/2) \;dz
   -   \int_{-a}^a \phi(\sqrt{\tq_{\ell}} z)^2 \exp(-z^2/2) \;dz \right|
$
using $\tq_{\ell}/2 \leq \oq_{\ell} \leq 2 \tq_{\ell}$.

Using changes of variables, we have
\begin{align*}
& \left|  \int_{-a}^a \phi(\sqrt{\oq_{\ell}} z)^2 \exp(-z^2/2) \;dz
   -    \int_{-a}^a \phi(\sqrt{\tq_{\ell}} z)^2 \exp(-z^2/2) \;dz \right| \\
& = \left|  \frac{1}{\sqrt{\oq_{\ell}}} \int_{-a \sqrt{\oq_{\ell}}}^{a \sqrt{\oq_{\ell}}}  \phi(z)^2 
                 \exp\left(-\frac{z^2}{2\oq_{\ell}}\right) \;dz
   -   
     \frac{1}{\sqrt{\tq_{\ell}}}
       \int_{-a \sqrt{\tq_{\ell}}}^{a \sqrt{\tq_{\ell}}} 
        \phi(z)^2 
   \exp\left(-\frac{z^2}{2\tq_{\ell}}\right)
            \;dz \right|.
\end{align*}
Since $\phi$ is permissible,
$\phi^2$ is bounded on 
$[- a \sqrt{2 \tq_{\ell}}, a  \sqrt{2 \tq_{\ell}}]$.
If $P$ is the distribution obtained by conditioning
$\Gauss(0,\oq_{\ell})$ on $[-a \sqrt{\oq_{\ell}}, a \sqrt{\oq_{\ell}} ]$,
and $\tilde{P}$ by conditioning
$\Gauss(0,\tq_{\ell})$ on $[-a \sqrt{\tq_{\ell}}, a \sqrt{\tq_{\ell}} ]$, then
if 
$M = \sqrt{2 \pi} 
     \sup_{z \in [- a \sqrt{2 \tq_{\ell}}, a  \sqrt{2 \tq_{\ell}}]} \phi(z)^2$,
since $\oq_{\ell} \leq 2 \tq_{\ell}$,
\[
\left|  \frac{1}{\sqrt{\oq_{\ell}}} \int_{-a \sqrt{\oq_{\ell}}}^{a \sqrt{\oq_{\ell}}}  \phi(z)^2 
                 \exp(-\frac{z^2}{2\oq_{\ell}}) \;dz
   -   
     \frac{1}{\sqrt{\tq_{\ell}}}
       \int_{-a \sqrt{\tq_{\ell}}}^{a \sqrt{\tq_{\ell}}} 
        \phi(z)^2 
   \exp(-\frac{z^2}{2\tq_{\ell}})
            \;dz \right|
 \leq M d_{TV}(P, \tilde{P}).
\]
But since, for $\kappa < 1/2$, conditioning on an event 
of probability at least 
$1 - \kappa$ only changes a distribution by
total variation distance at most $2 \kappa$,
and therefore, applying Lemma~\ref{l:gauss} along with the
fact that $| \oq_{\ell} - \tq_{\ell} | \leq \epsilon' \sigma_w^2$,
for the constant $C$ from Lemma~\ref{l:gauss}, we get
\begin{align*}
d_{TV}(P, \tilde{P})
 & \leq 4 \kappa + d_{TV}(\Gauss(0,\oq_{\ell}),\Gauss(0,\tq_{\ell})) \\
 & \leq 4 \kappa 
    + \frac{C | \sqrt{\oq_{\ell}} - \sqrt{\tq_{\ell}}|}{\sqrt{\tq_{\ell}}} \\
 & = 4 \kappa 
    + \frac{C | \oq_{\ell} - \tq_{\ell}|}
           {| \sqrt{\oq_{\ell}} + \sqrt{\tq_{\ell}}| \sqrt{\tq_{\ell}}} \\
 & \leq 4 \kappa 
    + \frac{C \epsilon' \sigma_w^2}{\tq_{\ell}}. \\
\end{align*}
Tracing back, 
we have
\[
\left|  \int_{-a}^a \phi(\sqrt{\oq_{\ell}} z)^2 \exp(-z^2/2) \;dz
   -    \int_{-a}^a \phi(\sqrt{\tq_{\ell}} z)^2 \exp(-z^2/2) \;dz \right|
\leq 
 M \left(4 \kappa + \frac{C \epsilon' \sigma_w^2}{\tq_{\ell}}\right)
\]
which implies
\begin{align*}
| \Ebb[r_{\ell}] - \tr_{\ell} |
& \leq
\left|  \int \phi(\sqrt{\oq_{\ell}} z)^2 \exp(-z^2/2) \;dz
   -    \int \phi(\sqrt{\tq_{\ell}} z)^2 \exp(-z^2/2) \;dz \right| \\
& \leq 
 M \left(4 \kappa + \frac{C \epsilon' \sigma_w^2}{\tq_{\ell}}\right)
  + 2 \beta.
\end{align*}
If $\kappa = \min\{ \frac{\epsilon}{24 M}, \frac{1}{3} \}$,
$\beta = \frac{\epsilon}{12}$, and
$
\epsilon' = \min \left\{ \frac{\epsilon}{2}, 
                     \frac{\epsilon}{2 \sigma_w^2}, 
                     \frac{\tq_{\ell}}{2 \sigma_w^2}, 
                     \frac{\tq_{\ell} \epsilon}{6 C M \sigma_w^2}
                   \right\}
$
this implies
$
| \Ebb[r_{\ell}] - \tr_{\ell} | \leq \epsilon/2.
$

Recall that $q_\ell$ is an average of $N$ 
identically distributed
random variables with a mean between $0$ and $2 \tq_{\ell}$ 
(which is therefore finite) and
$r_\ell$ is an average of $N$ identically distributed random variables, 
each with mean 
between $0$ and $\tr_{\ell} + \epsilon/2 \leq 2 \tr_{\ell}$.
Applying the weak law of large numbers
(Lemma~\ref{l:weak.law}), there is an
$N_{\ell}$ such that, if $N \geq N_{\ell}$, with probability
at least $1 - \delta/2$, both
$| q_{\ell} - \Ebb[q_{\ell}]| \leq \epsilon/2$
and 
$| r_{\ell} - \Ebb[r_{\ell}]| \leq \epsilon/2$ hold,
which in turn implies
$| q_{\ell} - \tq_{\ell}| \leq \epsilon$
and 
$| r_{\ell} - \tr_{\ell}| \leq \epsilon$, completing the
proof of the inductive step, and therefore the proof of 
Theorem~\ref{t:conv.prob}.

\section{Diversity of behavior in the distribution of hidden nodes}
\label{s:diversity}

In this section, we show that, for some activation functions,
the probability distribution of hidden nodes can
have some surprising properties.

\subsection{Non-Gaussian}
\label{s:cauchy}

In this subsection, we will show that
the hidden variables are sometimes not Gaussian.
Our proof will refer to the Cauchy distribution.
\begin{definition}
\label{d:cauchy}
A distribution over the reals that, for $x_0 \in \R$ and $\gamma > 0$, has a density $f$ given by
$f(x) = \frac{1}{\pi \gamma \left[ 1 + \left( \frac{x - x_0}{\gamma} \right)^2 \right]}$
is a {\em Cauchy distribution},
denoted by $\Cauchy(x_0, \gamma)$.  
$\Cauchy(0, 1)$ is the {\em standard Cauchy distribution}.
\end{definition}

\begin{lemma}[\cite{hazewinkel2013encyclopaedia.cauchy}]
\label{l:stable}
If $X_1,...,X_n$ are i.i.d.\ random variables with a Cauchy distribution, then
$\frac{1}{n} \sum_{i=1}^n X_i$ has the same distribution.
\end{lemma}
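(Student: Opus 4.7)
The plan is to prove this via characteristic functions, which is the standard and cleanest route for results about sums of independent random variables and which preserves the parameter-tracking that makes the invariance manifest.

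First, I would recall (or derive via contour integration over a semicircular contour in the appropriate half-plane, choosing the upper or lower half-plane according to the sign of $t$ so as to pick up the pole at $x_0 \pm i \gamma$) that the characteristic function of $\Cauchy(x_0, \gamma)$ is
\[
\varphi_{x_0, \gamma}(t) \;=\; \Ebb\!\left[ e^{it X} \right] \;=\; \exp\!\left( i x_0 t - \gamma |t| \right).
\]
This is the only genuinely analytic step; everything after it is algebra.

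Next, I would suppose $X_1, \dots, X_n$ are i.i.d.\ with distribution $\Cauchy(x_0, \gamma)$, and compute the characteristic function of $\bar X_n = \frac{1}{n}\sum_{i=1}^n X_i$. By independence and the scaling relation $\Ebb[e^{it \bar X_n}] = \prod_{i=1}^n \Ebb[e^{i(t/n) X_i}]$, we get
\[
\varphi_{\bar X_n}(t) \;=\; \varphi_{x_0, \gamma}(t/n)^n
 \;=\; \exp\!\left( i x_0 t - \gamma |t| \right)
 \;=\; \varphi_{x_0, \gamma}(t),
\]
where the key cancellation is that the factor of $n$ in the exponent cancels the $1/n$ appearing inside each $|t/n|$ and each $t/n$. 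Invoking the uniqueness theorem for characteristic functions then gives $\bar X_n \stackrel{d}{=} X_1$, which is exactly the claim.

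The main (and essentially only) obstacle is the characteristic function computation itself, since once the formula $\exp(i x_0 t - \gamma |t|)$ is in hand, the stability under averaging is immediate from the linearity of the exponent in $x_0$ and $\gamma$. Because the lemma is a classical fact with a textbook reference supplied in the citation, I would feel free either to cite the characteristic function formula directly or to include the short contour-integral derivation as a self-contained verification; no new ideas beyond standard complex analysis are needed.
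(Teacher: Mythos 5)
Your characteristic-function argument is correct and complete: the formula $\varphi_{x_0,\gamma}(t) = \exp(i x_0 t - \gamma |t|)$ together with $\varphi_{\bar X_n}(t) = \varphi_{x_0,\gamma}(t/n)^n$ immediately gives the claim. The paper itself offers no proof of this lemma --- it is imported as a classical fact with a citation to Hazewinkel's encyclopaedia --- and the route you chose is the standard textbook derivation, so there is nothing to reconcile.
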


\begin{lemma}[\cite{lupton1993statistics}]
\label{l:ratio}
If $U$ and $V$ are zero-mean normally distributed random variables with the
same variance, then $U/V$ has the standard Cauchy distribution.  
\end{lemma}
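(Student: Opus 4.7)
The plan is to reduce to a direct density calculation. By scale-invariance, $U/V$ has the same distribution whether the common variance is $\sigma^2$ or $1$, so I would first reduce to the case $U, V \sim \Gauss(0,1)$. (I would also flag that independence of $U$ and $V$ is an implicit hypothesis: otherwise, e.g., $U = V$ gives ratio $1$, which is not Cauchy.)

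From there I see two clean routes, and I would write up the first.

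\textbf{Route 1 (direct change of variables).} Consider the map $(u,v) \mapsto (w,v)$ with $w = u/v$, so that $u = wv$ and the Jacobian has absolute value $|v|$. The joint density of $(W,V)$ becomes
\[
f_{W,V}(w,v) = \frac{|v|}{2\pi} \exp\!\left( -\tfrac{1}{2}(1+w^2) v^2 \right).
\]
Integrating out $v$ using the substitution $t = \tfrac{1}{2}(1+w^2) v^2$, so that $|v|\,dv = dt/(1+w^2)$, yields
\[
f_W(w) = \frac{1}{2\pi} \cdot \frac{2}{1+w^2}\int_0^\infty e^{-t}\,dt = \frac{1}{\pi(1+w^2)},
\]
which is the density of $\Cauchy(0,1)$ from Definition~\ref{d:cauchy}.

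\textbf{Route 2 (polar coordinates), as a sanity check.} Since the joint density $(2\pi)^{-1}\exp(-(u^2+v^2)/2)$ is rotationally symmetric, writing $(U,V) = R(\cos\Theta, \sin\Theta)$ gives $\Theta \sim \unif[0,2\pi)$ independent of $R$, so $U/V = \cot\Theta$; a one-line density computation for $\cot\Theta$ (noting $\pi$-periodicity and monotonicity on $(0,\pi)$) recovers the same Cauchy density.

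There is no real obstacle here — this is a classical one-step calculation. The only care is in handling the absolute value $|v|$ in the Jacobian and in explicitly noting the independence assumption; I would therefore simply present Route 1 as the shortest self-contained derivation.
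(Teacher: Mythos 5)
The paper offers no proof of this lemma at all: it is imported wholesale from the cited reference \cite{lupton1993statistics}, so there is nothing internal to compare against. Your derivation is correct and is the standard one. Route 1's change of variables $(u,v)\mapsto(w,v)$ with $w=u/v$, the Jacobian factor $|v|$, and the substitution $t=\tfrac{1}{2}(1+w^2)v^2$ all check out and land exactly on the density $\frac{1}{\pi(1+w^2)}$ of $\Cauchy(0,1)$ as given in Definition~\ref{d:cauchy}; the preliminary reduction to unit variance via $U/V=(U/\sigma)/(V/\sigma)$ is also fine. Your remark that independence of $U$ and $V$ is an unstated but necessary hypothesis is a genuine catch --- the lemma as written in the paper is false without it (take $U=V$), and independence is indeed how the lemma is used in the proof of Proposition~\ref{p:not.gaussian}, where $W_{2,1,j}$ and $h_{1,j}$ are independent. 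The polar-coordinate sanity check in Route 2 is likewise correct. No gaps.
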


The following shows that 
there is a $\phi$ such that the limiting
$\uh_2$ is not defined.  It contradicts a claim made on
line 7 of Section A.1 of \cite{poole2016exponential}.
\begin{proposition}
\label{p:not.gaussian}
There is a $\phi$ such that, for every $\sigma_w > 0$, if $\sigma_b = 0$, then
(a) for finite $N$, $h_{2,1}$ does not have a Gaussian distribution, and
(b) $h_{2,1}$ diverges as $N$ goes to infinity.
\end{proposition}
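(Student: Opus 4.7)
The plan is to take $\phi(z) = 1/z$ for $z \neq 0$, with $\phi(0)$ defined arbitrarily (the value matters on a null set). First I would unpack the first layer: since $x_{0,:} \in \{-1,1\}^N$ gives $\|x_{0,:}\|^2 = N$ and $\sigma_b = 0$, each $h_{1,i} = \sum_j W_{1,i,j}\, x_{0,j}$ is distributed as $\Gauss(0,\sigma_w^2)$, and because the rows $W_{1,i,:}$ are independent Gaussian vectors, $h_{1,1},\dots,h_{1,N}$ are jointly independent. In particular $h_{1,i} \neq 0$ almost surely, so $x_{1,i} = 1/h_{1,i}$ is well defined a.s.

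Next I would expand the second preactivation at coordinate $1$: with $\sigma_b = 0$,
\[
h_{2,1} \;=\; \sum_{i=1}^N W_{2,1,i}\, x_{1,i} \;=\; \sum_{i=1}^N \frac{W_{2,1,i}}{h_{1,i}}.
\]
The weights $W_{2,1,1},\dots,W_{2,1,N}$ are iid $\Gauss(0,\sigma_w^2/N)$ and independent of the whole first layer, so the pairs $(W_{2,1,i}, h_{1,i})$ are jointly independent across $i$, and therefore the $N$ ratios are independent. Within a single term, rescaling makes the numerator and denominator equi-variant: $\sqrt{N}\, W_{2,1,i} \sim \Gauss(0,\sigma_w^2)$ and $h_{1,i} \sim \Gauss(0,\sigma_w^2)$ are independent, so Lemma~\ref{l:ratio} gives $\sqrt{N}\, W_{2,1,i}/h_{1,i} \sim \Cauchy(0,1)$, and hence $W_{2,1,i}/h_{1,i} \sim \Cauchy(0, 1/\sqrt{N})$.

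Finally I would close with Lemma~\ref{l:stable}, used in its scaling form: the average of $N$ iid $\Cauchy(0,\gamma)$ variables is again $\Cauchy(0,\gamma)$, so the sum is $\Cauchy(0, N\gamma)$. Applying this with $\gamma = 1/\sqrt{N}$ yields $h_{2,1} \sim \Cauchy(0, \sqrt{N})$ for every finite $N$. Part (a) follows because a Cauchy distribution has no finite variance and so cannot be Gaussian. For (b), the scale parameter $\sqrt{N}$ diverges, so for any fixed $M > 0$,
\[
\Pr\bigl(|h_{2,1}| \leq M\bigr) \;=\; \tfrac{2}{\pi}\arctan\!\left(M/\sqrt{N}\right) \;\to\; 0,
\]
which means the distribution of $h_{2,1}$ places vanishing mass on every bounded set and hence does not converge to any $\R$-valued random variable.

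There is no real analytic obstacle in this proof; the mildly delicate points are (i) verifying the joint independence of the $N$ ratios (which reduces to the row-independence of $W_{1,:,:}$ together with the independence of $W_{2,1,:}$ from layer one), (ii) inserting the $\sqrt{N}$ rescaling so that Lemma~\ref{l:ratio}'s equal-variance hypothesis is met, and (iii) choosing an unambiguous meaning for ``diverges,'' which I would phrase as the failure of $h_{2,1}$ to converge in distribution because the scale parameter of its Cauchy law is $\sqrt{N}$.
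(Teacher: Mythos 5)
Your proposal is correct and follows essentially the same route as the paper: the same choice $\phi(z)=1/z$, the same reduction of each summand to a Cauchy variable via Lemma~\ref{l:ratio} after a $\sqrt{N}$ rescaling, the same use of the stability property (Lemma~\ref{l:stable}) to conclude $h_{2,1}\sim\Cauchy(0,\sqrt{N})$, and the same non-tightness argument for divergence (your explicit $\frac{2}{\pi}\arctan(M/\sqrt{N})\to 0$ computation is just a more direct form of the paper's covering-by-continuity-intervals argument). No gaps.
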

\begin{proof}
Consider $\phi$ defined by 
$
\phi(y) = \left\{
            \begin{array}{ll}
              1/y & \mbox{if $y \neq 0$} \\
              0 & \mbox{if $y = 0$}.
            \end{array}
              \right.
$

Fix a value of $N$ and $\sigma_w > 0$, and take $\sigma_b = 0$.
Each component of $h_{1,:}$ 
is a sum of zero-mean Gaussians with variance $\sigma_w^2/N$; 
thus, for all $i$, $h_{1,i} \sim \Gauss(0, \sigma_w^2)$.  Now, almost surely,
$
h_{2,1} = \sum_{j=1}^N W_{2,1,j} \phi(h_{1,j})
     = \sum_{j=1}^N W_{2,1,j} /h_{1,j}.
$
By Lemma~\ref{l:ratio}, for each $j$, $W_{2,1,j} /h_{1,j}$ 
has a Cauchy distribution,
and since $(N W_{2,1,1}),...,(NW_{2,1,N}) \sim \Gauss(0, N \sigma_w^2)$, 
recalling that $h_{1,1},...,h_{1,N} \sim \Gauss(0, \sigma_w^2)$,
we have that $N W_{2,1,1} /h_{1,1},...,N W_{2,1,N} /h_N^1$ 
are i.i.d. $\Cauchy(0,\sqrt{N})$.  Applying
Lemma~\ref{l:stable}, 
$
h_{2,1} = \sum_{j=1}^N W_{2,1,j} \phi(h_{2,j}) = \frac{1}{N} \sum_{j=1}^N N W_{2,1,j} \phi(h_{1,j})
$
is also $\Cauchy(0,\sqrt{N})$.

So, for all $N$, $h_{2,1}$ is $\Cauchy(0,\sqrt{N})$.
Suppose that $h_{2,1}$ converged in distribution to
some distribution $P$.  Since the cdf of $P$ can have at most
countably many discontinuities, we can cover the real line by
a countable set of finite-length intervals 
$[a_1,b_1], [a_2,b_2], ...$ whose endpoints
are points of continuity for $P$.  
Since $\Cauchy(0,\sqrt{N})$ converges to $P$ in
distribution, for any $i$, 
$
P([a_i,b_i]) 
 \leq \lim_{N \rightarrow \infty} \frac{| b_i - a_i |}{\pi \sqrt{N}} = 0.
$
Thus, the probability assigned by $P$ to the entire real line is
$0$, a contradiction.
\end{proof}

\subsection{Independence}

The following contradicts a claim made on line 
8 of Section A.1 of \cite{poole2016exponential}.
\begin{theorem} \label{thm:independence}
If $\phi$ is either the ReLU or the Heaviside
function, then, 
for every  $\sigma_w > 0$,  $\sigma_b \geq 0$, 
and $N \geq 2$, $(h_{2,1},...,h_{2,N})$ 
are not independent.
\end{theorem}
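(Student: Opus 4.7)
The plan is to derive a contradiction from independence by exhibiting a strictly positive covariance between $h_{2,1}^2$ and $h_{2,2}^2$; if $(h_{2,1},\dots,h_{2,N})$ were independent, then $h_{2,1}^2$ and $h_{2,2}^2$ would be independent and hence uncorrelated. The main computation is organized around conditioning on all of the first-layer randomness $(W_{1,:,:}, b_{1,:})$, which freezes $x_{1,:} = \phi(h_{1,:})$.

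Conditioning on the first-layer randomness, the rows $W_{2,1,:},\dots,W_{2,N,:}$ of $W_2$ are mutually independent and independent of $b_{2,:}$, and the entries $b_{2,i}$ are mutually independent, so $h_{2,1},\dots,h_{2,N}$ are conditionally independent zero-mean Gaussians, each with conditional variance
\[
\oq_2 \eqdef \sigma_b^2 + \frac{\sigma_w^2}{N}\|x_{1,:}\|_2^2 = \sigma_b^2 + \sigma_w^2 r_1,
\]
where $r_1 = \frac{1}{N}\sum_j \phi(h_{1,j})^2$, exactly as in the proof of Theorem~\ref{t:conv.prob}. The tower property then gives $\E[h_{2,i}^2] = \E[\oq_2]$ and $\E[h_{2,1}^2 h_{2,2}^2] = \E[\oq_2^2]$, so
\[
\Cov(h_{2,1}^2, h_{2,2}^2) = \Var(\oq_2) = \sigma_w^4 \Var(r_1).
\]

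It remains to check $\Var(r_1) > 0$ for each of the two activations. Since $x_{0,:}\in\{-1,1\}^N$ gives $\|x_{0,:}\|_2^2 = N$, the components $h_{1,1},\dots,h_{1,N}$ are i.i.d.\ $\Gauss(0,\sigma_w^2+\sigma_b^2)$, and $\sigma_w > 0$ makes this Gaussian non-degenerate. Hence $\phi(h_{1,j})^2$ are i.i.d.\ and $\Var(r_1) = \tfrac{1}{N}\Var(\phi(h_{1,1})^2)$, so it suffices to check that $\phi(h_{1,1})^2$ is a non-constant random variable. For the Heaviside activation, $\phi(h_{1,1})^2 = \mathbf{1}[h_{1,1}>0]$ is Bernoulli$(1/2)$, with variance $1/4$. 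For the ReLU, $\phi(h_{1,1})^2 = h_{1,1}^2\, \mathbf{1}[h_{1,1}>0]$ equals $0$ on $\{h_{1,1}\le 0\}$ (probability $1/2$) and is strictly positive on its complement, so again has strictly positive variance. Combining, $\Cov(h_{2,1}^2, h_{2,2}^2) = \sigma_w^4 \Var(r_1) > 0$, contradicting independence. The one step requiring attention is the tower-property calculation of the conditional second moments; the choice of activation only enters at the very end, to guarantee that $\phi(h_{1,1})^2$ is non-constant.
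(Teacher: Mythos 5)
Your proof is correct, and it reaches exactly the same quantity as the paper --- $\Cov(h_{2,1}^2,h_{2,2}^2)=\frac{\sigma_w^4}{N}\bigl(\Ebb[x^4]-\Ebb[x^2]^2\bigr)$ for $x$ distributed as a component of $x_{1,:}$ --- but by a genuinely different route. The paper expands $\Ebb[h_{2,1}^2h_{2,2}^2]$ as a quadruple sum over indices of products of weights, activations, and biases, kills the odd-moment terms one by one, and only then specializes to the two activations. You instead condition on the first layer, observe that $h_{2,1},\dots,h_{2,N}$ are then i.i.d.\ $\Gauss(0,\oq_2)$ with the random variance $\oq_2=\sigma_b^2+\sigma_w^2 r_1$, and apply the tower property to get $\Cov(h_{2,1}^2,h_{2,2}^2)=\Var(\oq_2)=\sigma_w^4\Var(r_1)=\frac{\sigma_w^4}{N}\Var(\phi(h_{1,1})^2)$. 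This is shorter, makes the mechanism of the dependence transparent (the shared random ``length'' $r_1$ of the first hidden layer), immediately explains why the dependence decays like $1/N$, and generalizes beyond ReLU and Heaviside to any $\phi$ for which $\phi(h_{1,1})^2$ is non-constant with finite second moment. The paper's brute-force expansion buys nothing extra here except that it avoids invoking conditional independence and Gaussianity of the second-layer preactivations, facts the paper establishes anyway in the proof of Theorem~\ref{t:conv.prob}. Two minor points to keep tidy: you should note (as you implicitly do) that $\Ebb[\phi(h_{1,1})^4]<\infty$ so that all covariances are well-defined, and your observation that $h_{1,j}\sim\Gauss(0,\sigma_w^2+\sigma_b^2)$ is actually more careful than the paper's ReLU computation, which uses $\Gauss(0,\sigma_w^2)$; neither discrepancy affects the strict positivity that the theorem needs.
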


\begin{proof}
We will show that 
$\Ebb[ h_{2,1}^2 h_{2,2}^2 ] \neq \Ebb[ h_{2,1}^2] \Ebb[h_{2,2}^2]$,
which will imply that $h_{2,1}$ and $h_{2,2}$ are not independent.

As mentioned earlier,
because each component of $h_{1,:}$ is the dot product of $x_{0,:}$
with an independent row of $W_{1,:,:}$ plus an independent component
of $b_{1,:}$, the components of $h_{1,:}$ are independent, and
since $x_{1,:} = \phi(h_{1,:})$, this implies that the components of
$x_{1,:}$ are independent.  Since each row of
$W_{1,:,:}$ and each component of the bias vector has the same
distribution, $x_{1,:}$ is i.i.d.

We have
\begin{align*}
\Ebb[h_{2,1}^2]&=\Ebb\left[ \left[ \left( \sum_{i \in [N]} W_{2,1,i} x_{1,i} \right) +b_{2,1} \right]^2\right]  
  \\
          &
 =  \sum_{(i,j) \in [N]^2} \Ebb\left[ W_{2,1,i} W_{2,1,j} x_{1,i} x_{1,j} \right]  
 + 2 \sum_{i \in [N]} \Ebb\left[ W_{2,1,i} x_{1,i} b_{2,1} \right] +  \Ebb\left[ b_{2,1}^2 \right] .
\end{align*}
The components of
$W_{2,:,:}$ and $x_{1,:}$, along with $b_{2,1}$,
are mutually independent,
so terms in the double sum with $i \neq j$ have zero expectation, 
and
$
\Ebb[h_{2,1}^2] = \left( \sum_{i \in [N]}  \Ebb\left[ W_{2,1,i}^2 \right] \Ebb \left[ x_{1,i}^2 \right] \right) + \Ebb[b_{2,1}^2].
$
For a random variable $x$ with the same distribution as the components
of $x_{1,:}$, this implies
\begin{equation}
\label{e:one.square}
\Ebb[h_{2,1}^2] = \sigma_w^2 \Ebb \left[ x^2 \right]+\sigma_b^2.
\end{equation}

Similarly,
\begin{align*}
& \Ebb[h_{2,1}^2 h_{2,2}^2] \\
    & = \Ebb\left[ \left[ \sum_{i \in [N]} W_{2,1,i} x_{1,i} + b_{2,1} \right]^2 \left[ \sum_{i \in [N]} W_{2,2,i} x_{1,i} + b_{2,2} \right]^2 \right]  \\
    &=\sum_{(i,j,r,s) \in [N]^4} \Ebb[ W_{2,1,i} W_{2,1,j} W_{2,2,r} W_{2,2,s} x_{1,i} x_{1,j} x_{1,r} x_{1,s} ] \\
    & \quad+ 2 \sum_{(i,j,r) \in [N]^3} \Ebb[ W_{2,1,i} W_{2,1,j} W_{2,2,r}  x_{1,i} x_{1,j} x_{1,r} b_{2,2} ] 
             \!+\! 2 \sum_{(i,r,s) \in [N]^3} \Ebb[ W_{2,1,i} W_{2,2,r} W_{2,2,s}  x_{1,i} x_{1,r} x_{1,s} b_{2,1} ] \\
    & \quad+ 4 \sum_{(i,r) \in [N]^2} \Ebb[ W_{2,1,i} W_{2,2,r}  x_{1,i} x_{1,r} b_{2,1} b_{2,2} ] \\
    & \quad+ \sum_{(i,j) \in [N]^2} \Ebb[ W_{2,1,i} W_{2,1,j}  x_{1,i} x_{1,j} b_{2,2}^2 ] 
           + \sum_{(r,s) \in [N]^2} \Ebb[ W_{2,2,r} W_{2,2,s}  x_{1,r} x_{1,s} b_{2,1}^2 ]  \\
    & \quad+ 2 \sum_{i \in [N]} \Ebb[ W_{2,1,i} x_{1,i} b_{2,1} b_{2,2}^2 ] 
           + 2 \sum_{r \in [N]} \Ebb[ W_{2,2,r} x_{1,r} b_{2,1}^2 b_{2,2} ]  \\
    & \quad+ \Ebb[ b_{2,1}^2 b_{2,2}^2 ]  \\
    &=\sum_{(i,r) \in [N]^2, i \neq r} \Ebb[ W_{2,1,i}^2 W_{2,2,r}^2] \Ebb[ x_{1,i}^2 ] \Ebb [x_{1,r}^2 ] 
        + \sum_{i \in [N]} \Ebb[ W_{2,1,i}^2 W_{2,2,i}^2] \Ebb[ x_{1,i}^4 ] \\
    & \quad+ \sum_{i \in [N]} \Ebb[ W_{2,1,i}^2]  \Ebb[ x_{1,i}^2 ] \Ebb[ b_{2,2}^2 ] 
           + \sum_{r \in [N]} \Ebb[ W_{2,2,r}^2 ] \Ebb[ x_{1,r}^2 ] \Ebb[ b_{2,1}^2 ]  \\
    & \quad+ \Ebb[ b_{1,2}^2 b_{2,2}^2 ]  \\
    & = \frac{(N^2 - N) \sigma_w^4 \Ebb[ x^2 ]^2}{N^2} + \frac{N \sigma_w^4  \Ebb[ x^4 ]}{N^2} +\frac{2N \sigma_w^2 \Ebb[x^2] \sigma_b^2}{N} + \sigma_b^4 \\
    & = \sigma_w^4 \Ebb[ x^2 ]^2 + \frac{\sigma_w^4  (\Ebb[ x^4 ] - \Ebb[ x^2 ]^2)}{N} + 2 \sigma_w^2 \sigma_b^2 \Ebb[x^2] + \sigma_b^4. \\
\end{align*} 

Putting this together with (\ref{e:one.square}), we have
\begin{equation}
\label{e:by.x.moments}
\Ebb[h_{2,1}^2 h_{2,2}^2] - \Ebb[h_{2,1}^2] \Ebb[h_{2,2}^2]
 = \frac{\sigma_w^4 ( \Ebb[ x^4 ] - \Ebb[ x^2 ]^2)}{N}.
\end{equation}

Now, we calculate the difference using \eqref{e:by.x.moments} for the Heaviside and ReLU functions.

{\bf Heaviside.}  Suppose $\phi$ is Heaviside function, i.e.\ $\phi(z)$ is the indicator function
for $z > 0$.  In this case, since the components of $h_{1,:}$ are symmetric about $0$,
the distribution of $x_{1,:}$ is uniform over $\{ 0,1 \}^N$.  Thus
$\Ebb[ x^4 ] = \Ebb[ x^2 ] = 1/2$, and so \eqref{e:by.x.moments} gives
$
\Ebb[h_{2,1}^2 h_{2,2}^2] - \Ebb[h_{2,1}^2] \Ebb[h_{2,2}^2] = \frac{3 \sigma_w^4 }{4 N} \neq 0.
$

{\bf ReLU.}  Next, we consider the case that $\phi$ is the ReLU.  
Recalling that, for all $i$, $h_{1,i} \sim \Gauss(0,\sigma_w^2)$, we have
$
\Ebb[ x^2 ] 
= \frac{1}{\sqrt{2\pi\sigma_w^2}}\int_0^\infty z^2 \exp\left(\frac{-z^2}{2\sigma_w^2}\right)dz.
$
By symmetry this is
$\frac{1}{2} \Ebb_{z \sim \Gauss(0,\sigma_w^2)}[z^2] = \sigma_w^2/2$.  
Similarly, $\Ebb[ x^4 ] = 
             \frac{1}{2} \Ebb_{z \sim \Gauss(0,\sigma_w^2)}[z^4] = \frac{3 \sigma^4}{2}$.
Plugging these into \eqref{e:by.x.moments} we get that, in the case the $\phi$
is the ReLU, that
\[
\Ebb[h_{2,1}^2 h_{2,2}^2] - \Ebb[h_{2,1}^2] \Ebb[h_{2,2}^2]
 = \frac{\sigma_w^4 \left((3/2) \sigma_w^4 -  \sigma_w^4/4\right)}{N}
 = \frac{5 \sigma_w^8}{4 N}
 > 0,
\]
completing the proof.
\end{proof}

Note that, informally, the degree of dependence established in
the proof of Theorem~\ref{thm:independence} approaches $0$
as $N$ gets large.  

\subsection{Undefined length map}
Here, we show, informally, that for
$\phi$ at the boundary of the second
condition in the definition of permissibility,
the recursive formula defining the length
map $\tq_{\ell}$ breaks down.
Roughly, this condition cannot be relaxed.

\begin{proposition}
\label{p:undefined}
For any $\alpha > 0$, if $\phi$ is defined by
$\phi(x)=\exp(\alpha x^2)$, there exists a $\sigma_w, \sigma_b$ s.t.\ $\tq_{\ell}, \tr_{\ell}$ is undefined for all $\ell \geq 2$.
\end{proposition}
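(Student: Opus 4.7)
The plan is to evaluate the key Gaussian integral defining $\tr_\ell$ in closed form, observe that it diverges as soon as $\tq_\ell$ crosses a threshold determined by $\alpha$, and then choose $\sigma_w, \sigma_b$ that push the recursion over this threshold at layer $\ell = 1$. Substituting $\phi(x) = \exp(\alpha x^2)$ into the definition of $\tr_\ell$ gives
\[
\tr_\ell \;=\; \frac{1}{\sqrt{2\pi}} \int_{-\infty}^{\infty} \exp\!\big((2\alpha \tq_\ell - 1/2)\, z^2\big) \, dz,
\]
which equals $(1 - 4\alpha \tq_\ell)^{-1/2}$ when $\tq_\ell < 1/(4\alpha)$ and is $+\infty$ whenever $\tq_\ell \ge 1/(4\alpha)$, because in the latter case the coefficient of $z^2$ in the exponent is nonnegative and the integrand does not decay.

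Next I would exhibit the offending pair. Taking $\sigma_b = 0$ and $\sigma_w^2 = 1/(2\alpha) > 1/(4\alpha)$, the base case $\tq_0 = \tr_0 = 1$ yields $\tq_1 = \sigma_w^2 \tr_0 + \sigma_b^2 = 1/(2\alpha)$, already above the threshold. By the integral computation, $\tr_1 = +\infty$, so $\tq_2 = \sigma_w^2 \tr_1 + \sigma_b^2$ is not a finite real number. A one-line induction then propagates the failure to every $\ell \ge 2$: whenever $\tq_\ell$ fails to be a finite real number, treating it as $+\infty$ the coefficient $2\alpha \tq_\ell - 1/2$ in the exponent remains nonnegative, so $\tr_\ell$ is $+\infty$, and then $\tq_{\ell+1} = \sigma_w^2 \tr_\ell + \sigma_b^2$ fails in turn.

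The only minor obstacle is careful bookkeeping of what ``undefined'' means: one must distinguish the Gaussian integral being $+\infty$ from $\tq_\ell$ being merely arithmetically undefined, and verify that both modes of failure propagate through the affine step $\tq_{\ell+1} = \sigma_w^2 \tr_\ell + \sigma_b^2$ as well as through the integral step. Beyond this, no deeper argument is needed; the core computation is simply the standard moment-generating-function identity $\Ebb_{z \sim \Gauss(0,1)}[\exp(s z^2)] = (1 - 2s)^{-1/2}$ for $s < 1/2$, specialized to $s = 2\alpha \tq_\ell$.
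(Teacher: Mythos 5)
Your proposal is correct and takes essentially the same route as the paper: pick $\sigma_w,\sigma_b$ so that $\tq_1$ reaches the threshold at which the Gaussian integral defining $\tr_1$ diverges, then note the failure propagates to all later layers. In fact your computation is the more careful one --- the paper's displayed integral drops the square on $\phi$ and writes the exponent as $\alpha\sqrt{\tq_1}\,z^2$ rather than $2\alpha\tq_1 z^2$ (which is why it picks $\sigma_w^2+\sigma_b^2 = 1/(4\alpha^2)$ instead of your threshold $\tq_1\ge 1/(4\alpha)$), whereas your version matches the paper's actual definition of $\tr_\ell$ via $\phi(\sqrt{\tq_\ell}z)^2$.
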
 

\begin{proof}
Suppose $\sigma^2_w + \sigma_b^2 = \frac{1}{4 \alpha^2}$.  
Then $\tq_1 = \frac{1}{4 \alpha^2}$, so
that
\begin{align*}
\tr_1 
  & =\frac{1}{\sqrt{2\pi}} \int_{-\infty}^{\infty} \phi(\sqrt{\tq_1} z) \exp\left(-\frac{z^2}{2}\right)dz  
 =\frac{1}{\sqrt{2\pi}} \int_{-\infty}^{\infty} \exp( \alpha \sqrt{\tq_1} z^2) \exp\left(-\frac{z^2}{2}\right)dz  
  \\
  & 
 =\frac{1}{\sqrt{2\pi}} \int_{-\infty}^{\infty} \exp( z^2/2) \exp\left(-\frac{z^2}{2}\right)dz  
 = \infty,
\end{align*}
and downsteam values of $\tq_{\ell}$ and $\tr_{\ell}$ are undefined.
\end{proof}

%

\section{Experiments}
\label{s:experiments}

Our first experiment fixed $x[0,:] = (1,...,1)$,
$\sigma_w = 1$, $\sigma_b = 0$, $\phi(z) = 1/z$.

For each $N \in \{ 10, 100, 1000 \}$, we 
(a) initialized the
weights $100$ times, (b) plotted the histograms
of all of the values of $h[2,:]$, along with the
$\Cauchy(0,\sqrt{N})$ distribution from
the proof of Proposition~\ref{p:not.gaussian},
and $\Gauss(0,\sigma^2)$ for $\sigma$
estimated from the data.
\begin{figure}[!ht]
    \centering
    \begin{subfigure}[b]{0.3\textwidth}
        \includegraphics[width=\textwidth]{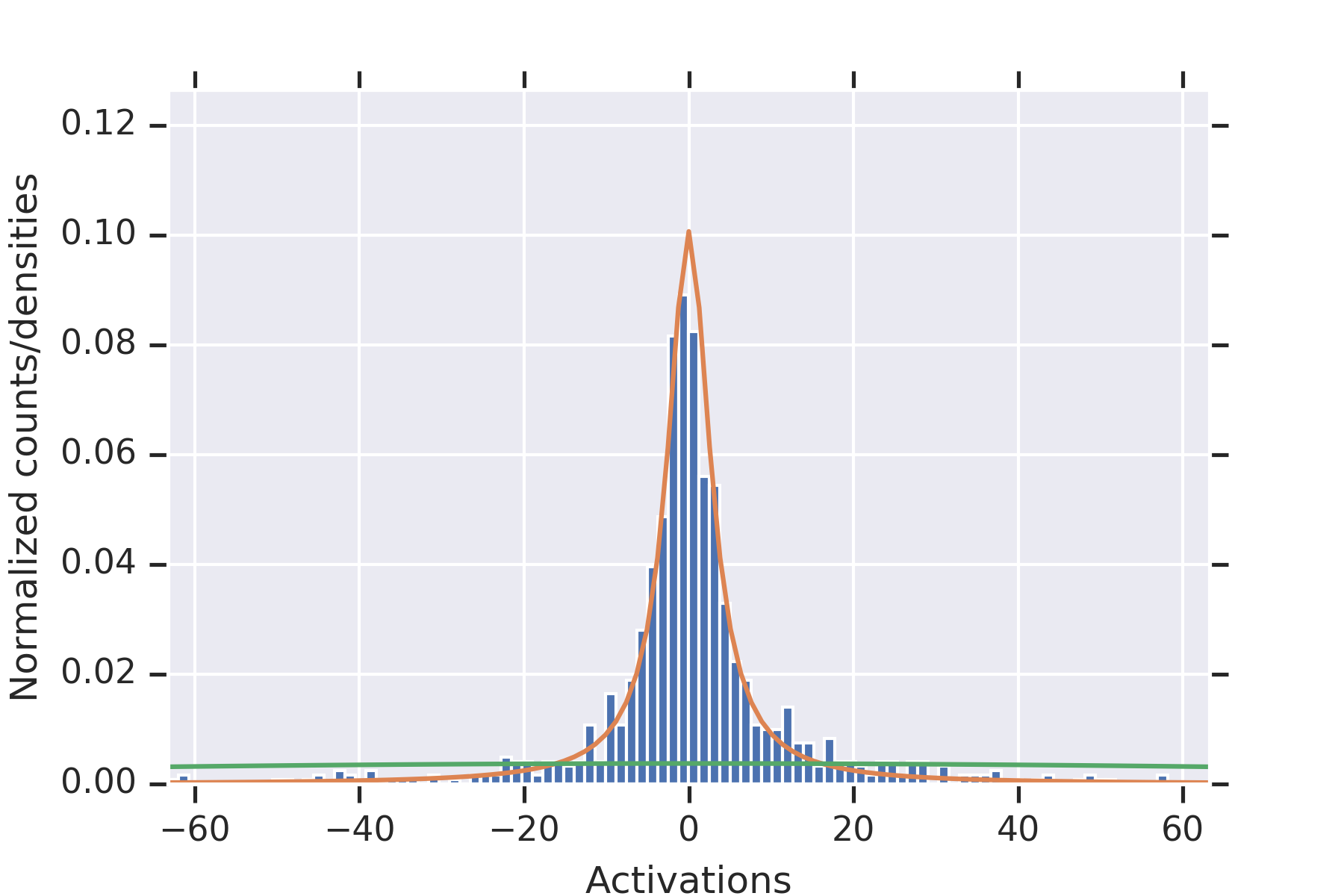}
        \caption{$N=10$}
        \label{fig:10}
    \end{subfigure}
    ~ 
    \begin{subfigure}[b]{0.3\textwidth}
        \includegraphics[width=\textwidth]{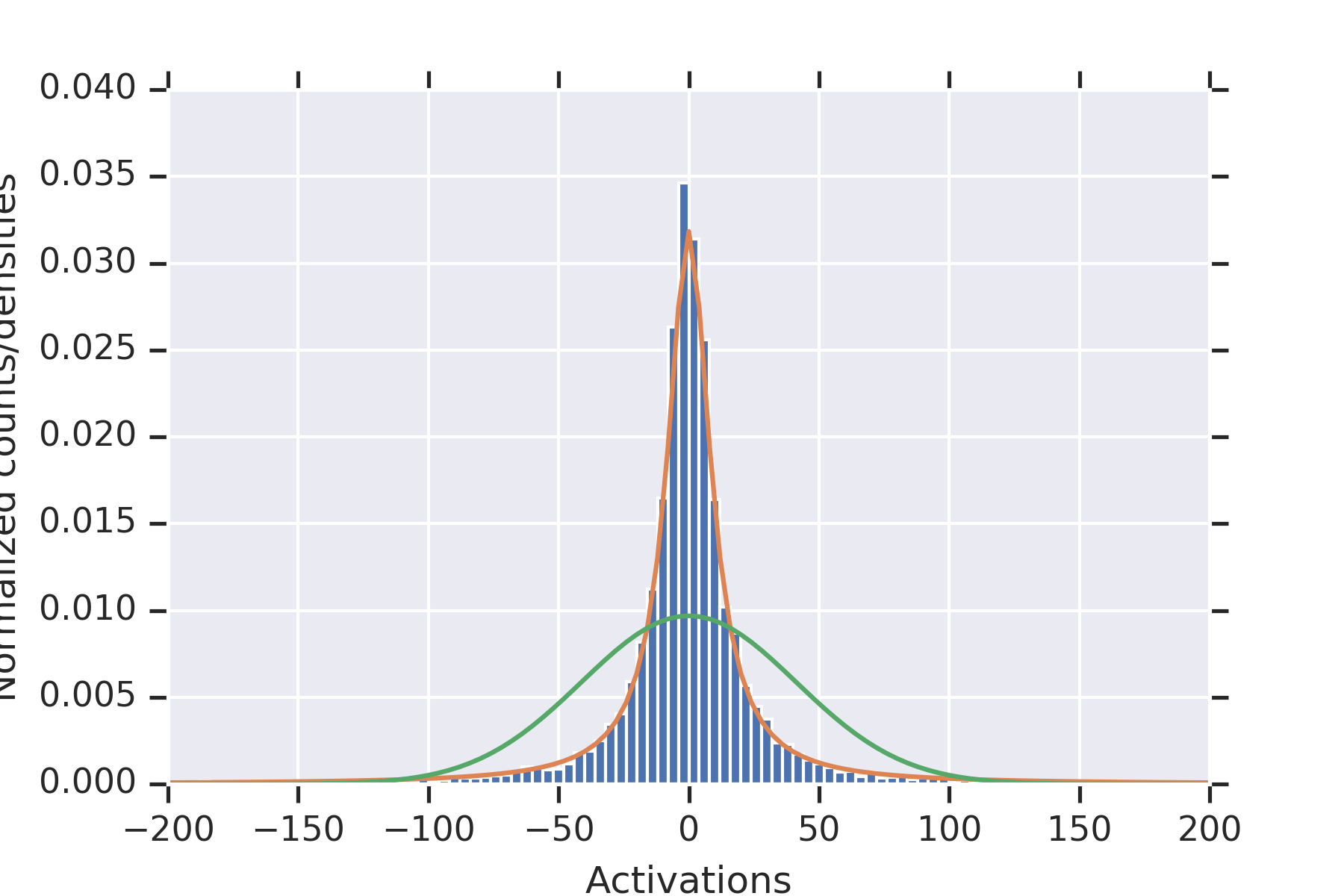}
        \caption{$N=100$}
        \label{fig:100}
    \end{subfigure}
    ~ 
    \begin{subfigure}[b]{0.3\textwidth}
        \includegraphics[width=\textwidth]{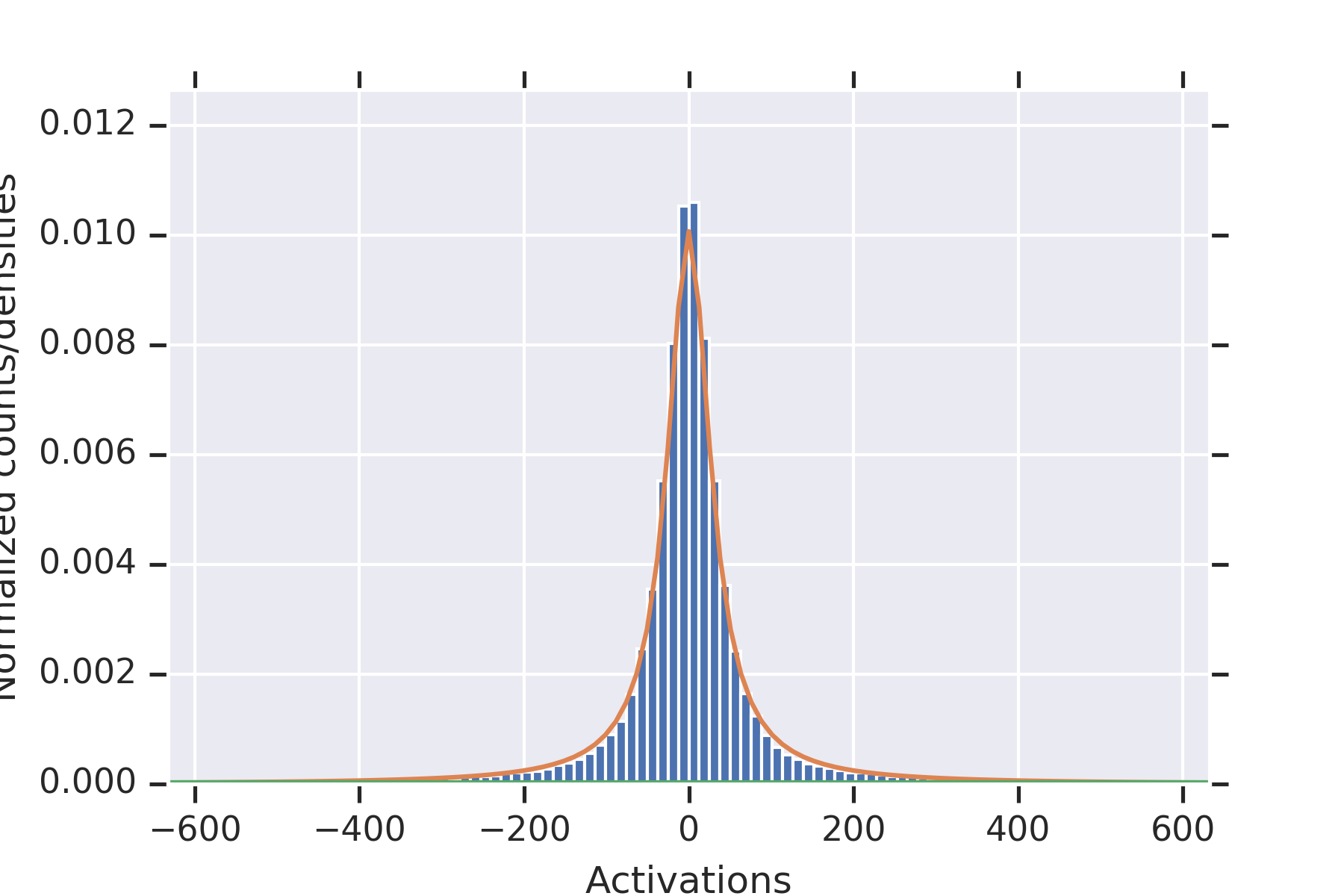}
        \caption{$N=1000$}
        \label{fig:1000}
    \end{subfigure}
    \caption{Histograms of $h[2,:]$,
             averaged over 100 random initializations,
             for $N \in \{ 10, 100, 1000 \}$, along
             with $\Cauchy(0,\sqrt{N})$ (shown in green)
             and  $\Gauss(0,\sigma^2)$ for $\sigma$ estimated
             from the data (shown in red).  When we average over
             multiple random initializations of the weights,
             the distribution of the activations matches
             the Cauchy distribution, and not the Gaussian.
             }
\label{fig:merge_sample}
\end{figure}
Consistent with the theory, the $\Cauchy(0,\sqrt{N})$ distribution
fits the data well.

To illustrate the fact that the values in the second hidden layer are
not independent, for $N = 1000$ and the parameters otherwise as in
the other experiment, we plotted histograms of the values seen in
the second layer for nine random initializations of the weights
in Figure~\ref{fig:separate_sample}.  
When some of the values in the first hidden layer have
unusually small magnitude, then
the values in the second hidden layer coordinately tend to be large.
\begin{figure}[!ht]
\begin{center}
    \begin{subfigure}[b]{0.3\textwidth}
        \includegraphics[width=\textwidth]{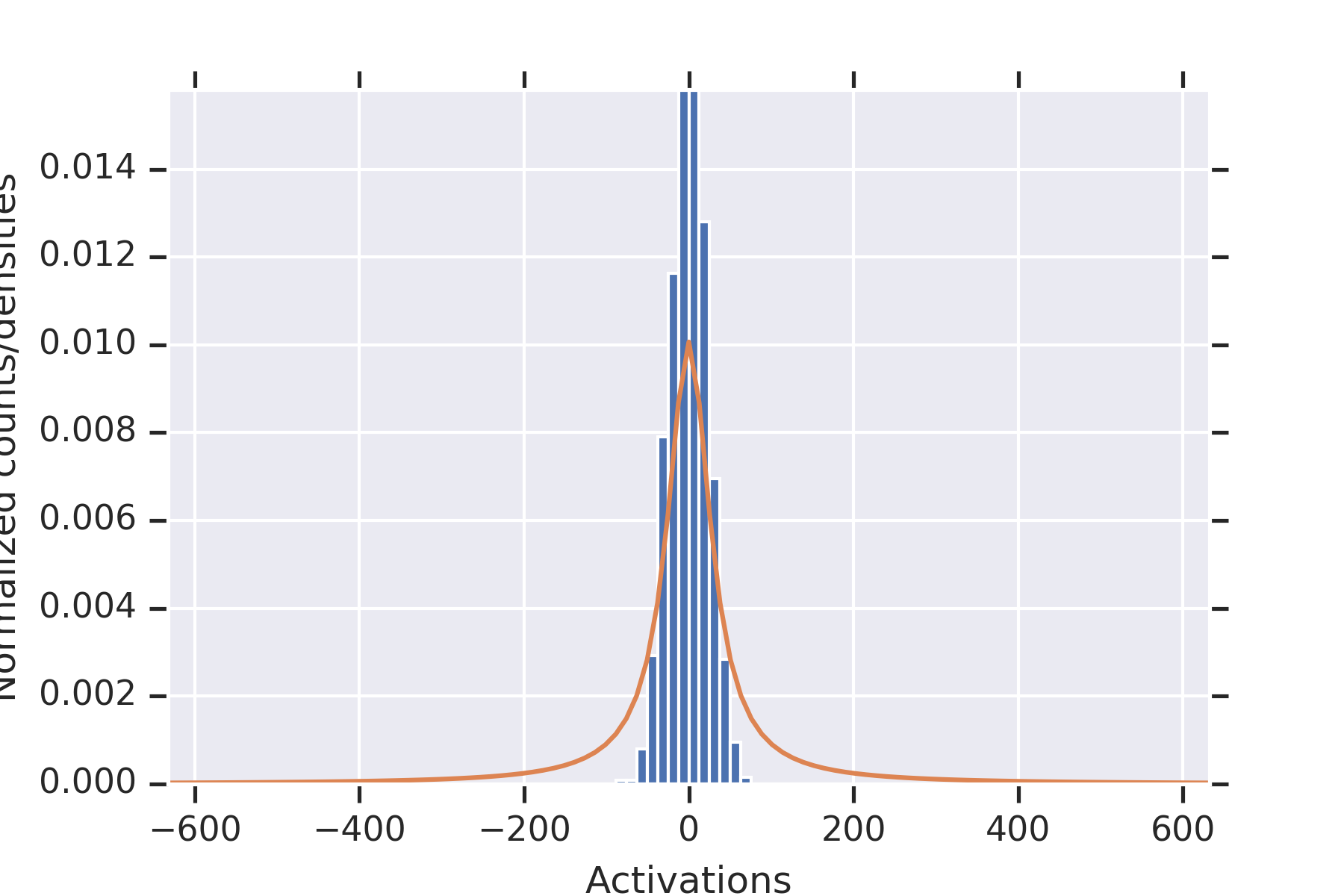}
    \end{subfigure}
    ~ 
    \begin{subfigure}[b]{0.3\textwidth}
        \includegraphics[width=\textwidth]{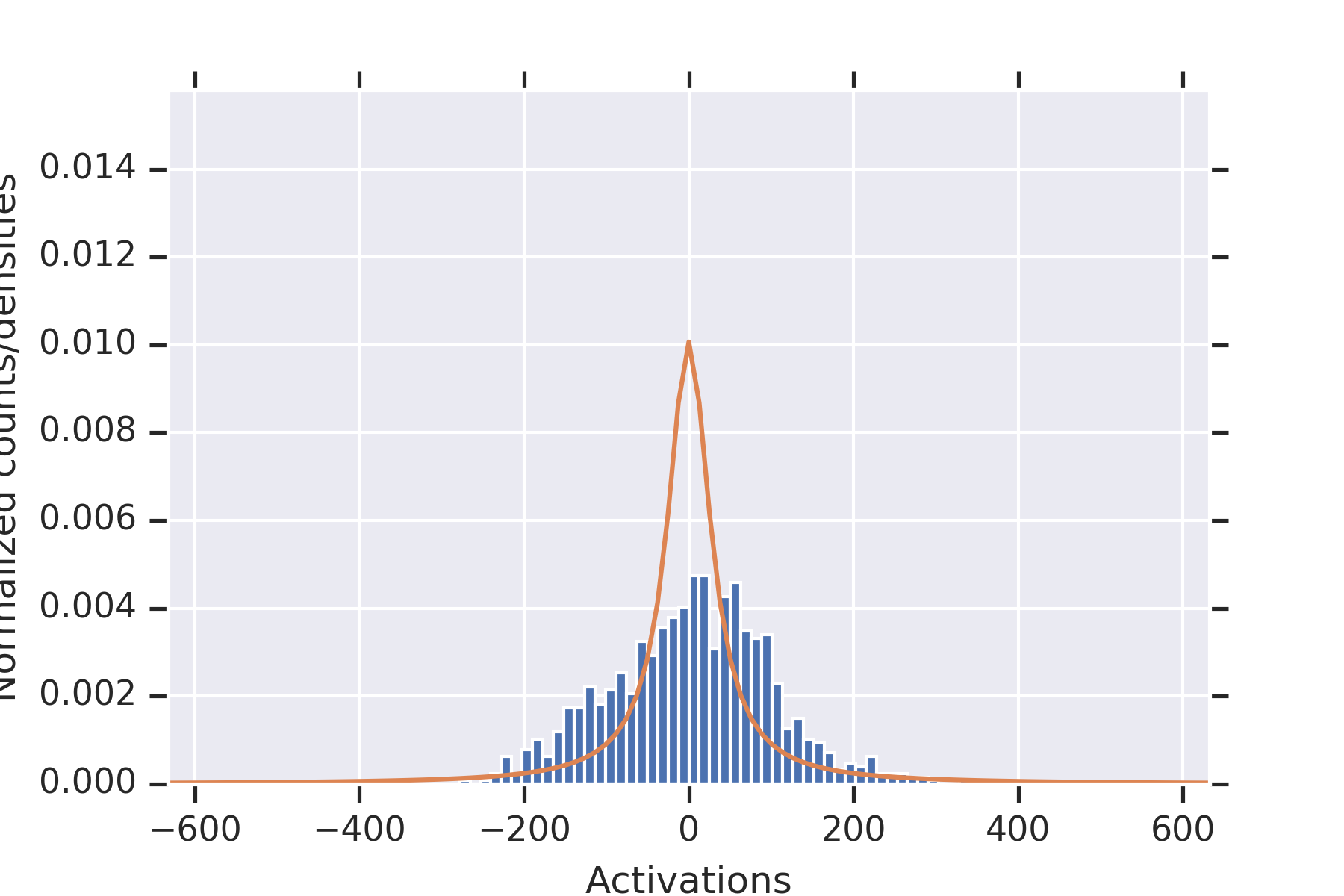}
    \end{subfigure}
    ~ 
    \begin{subfigure}[b]{0.3\textwidth}
        \includegraphics[width=\textwidth]{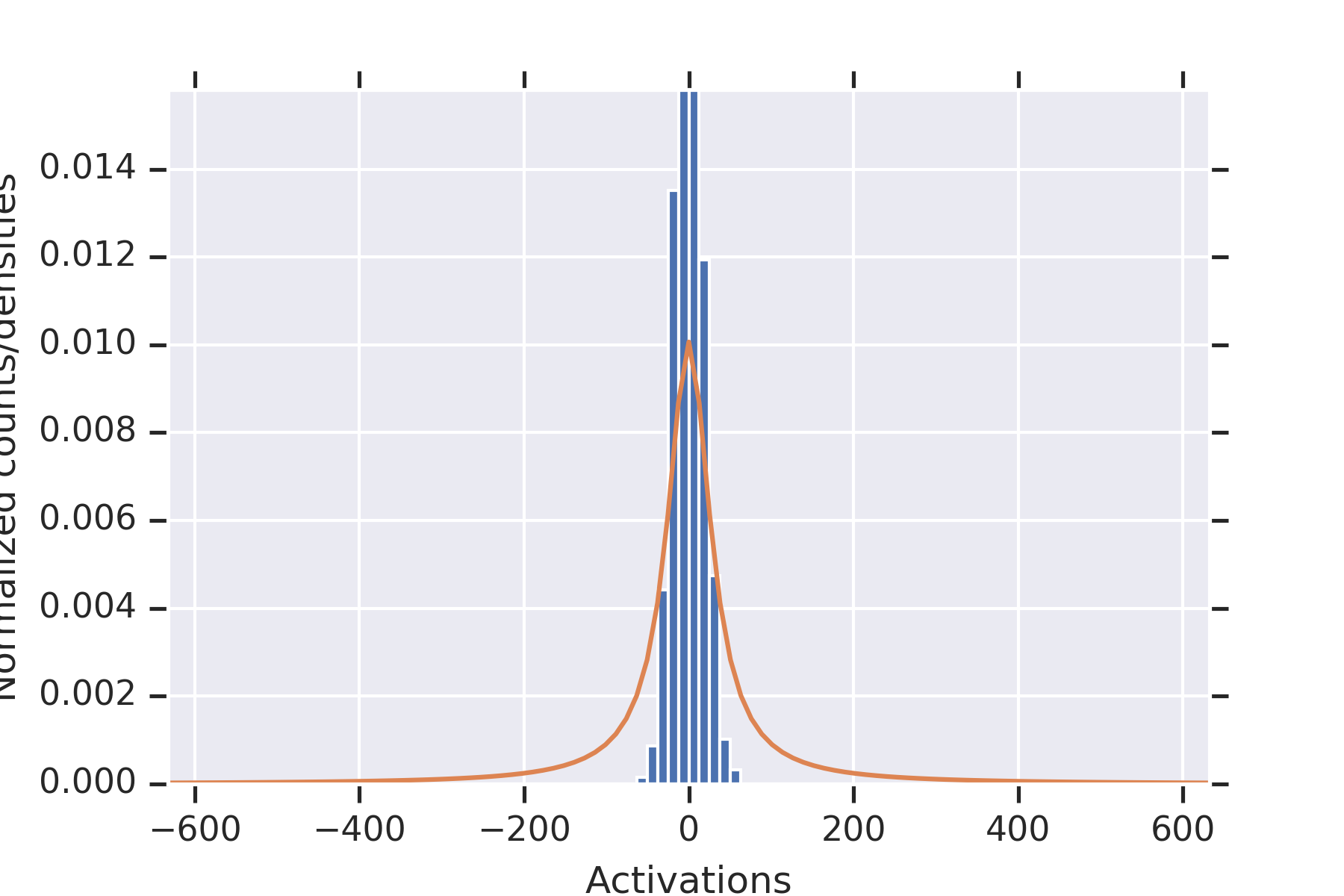}
    \end{subfigure} \\
    \begin{subfigure}[b]{0.3\textwidth}
        \includegraphics[width=\textwidth]{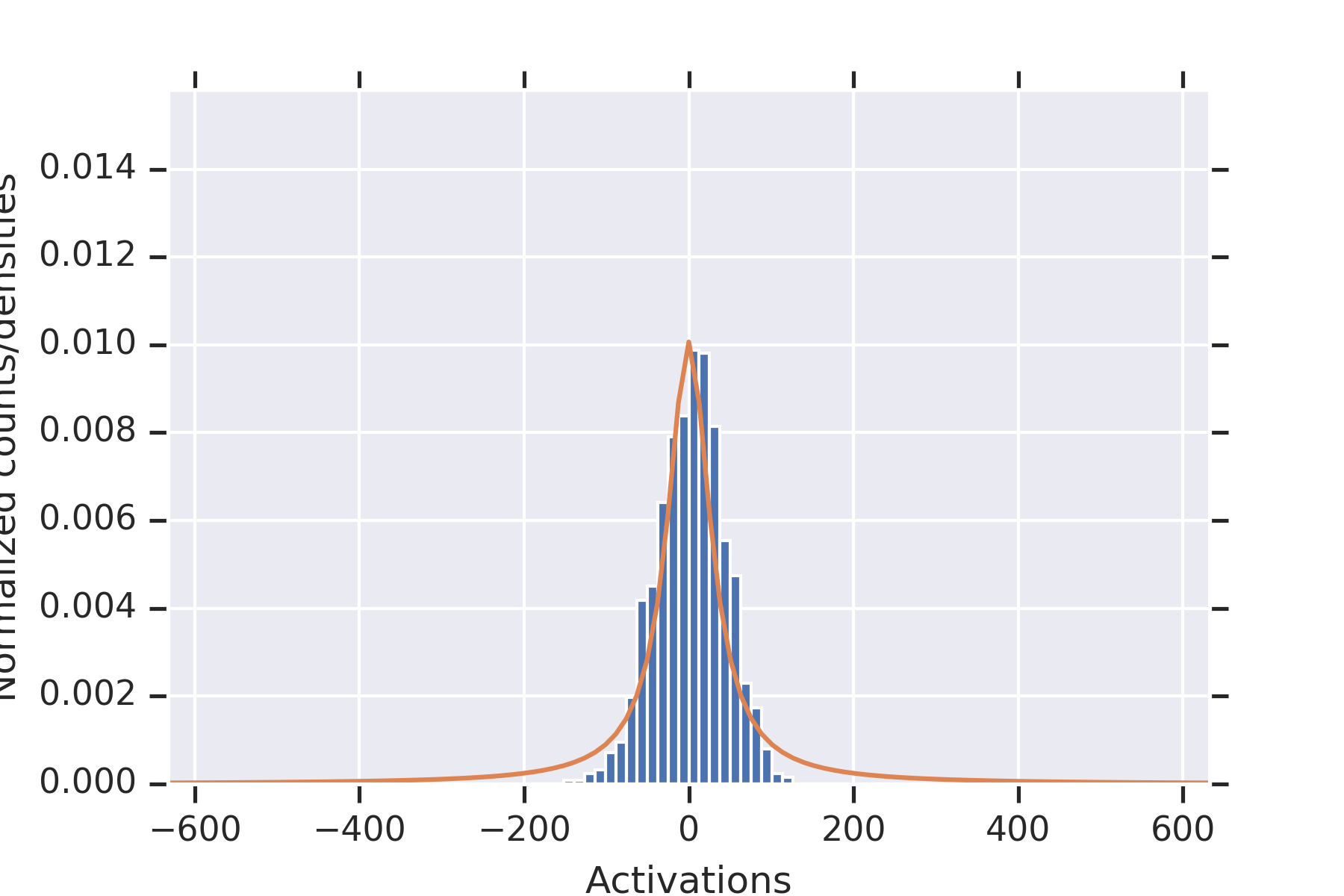}
    \end{subfigure}
    ~ 
    \begin{subfigure}[b]{0.3\textwidth}
        \includegraphics[width=\textwidth]{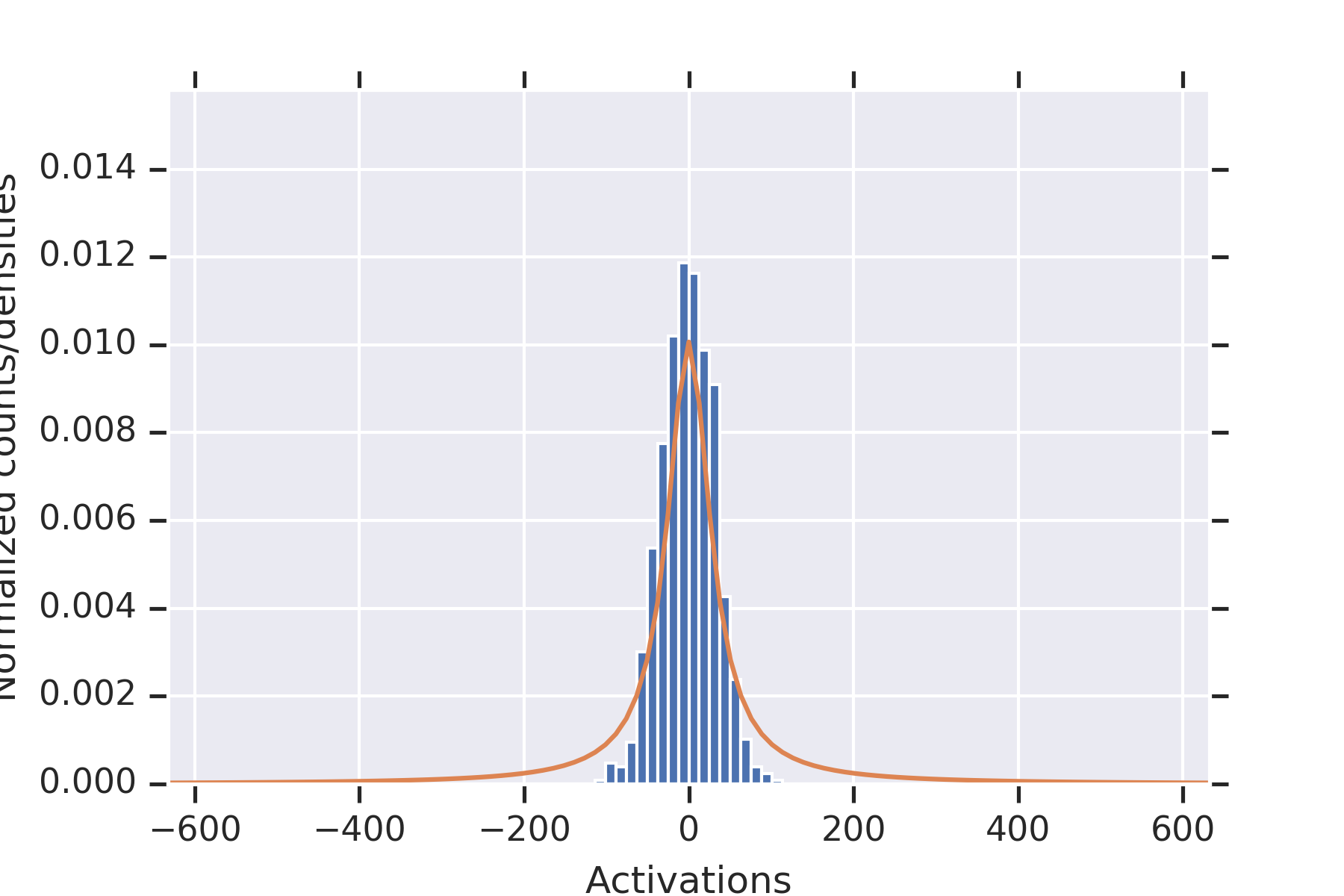}
    \end{subfigure}
    ~ 
    \begin{subfigure}[b]{0.3\textwidth}
        \includegraphics[width=\textwidth]{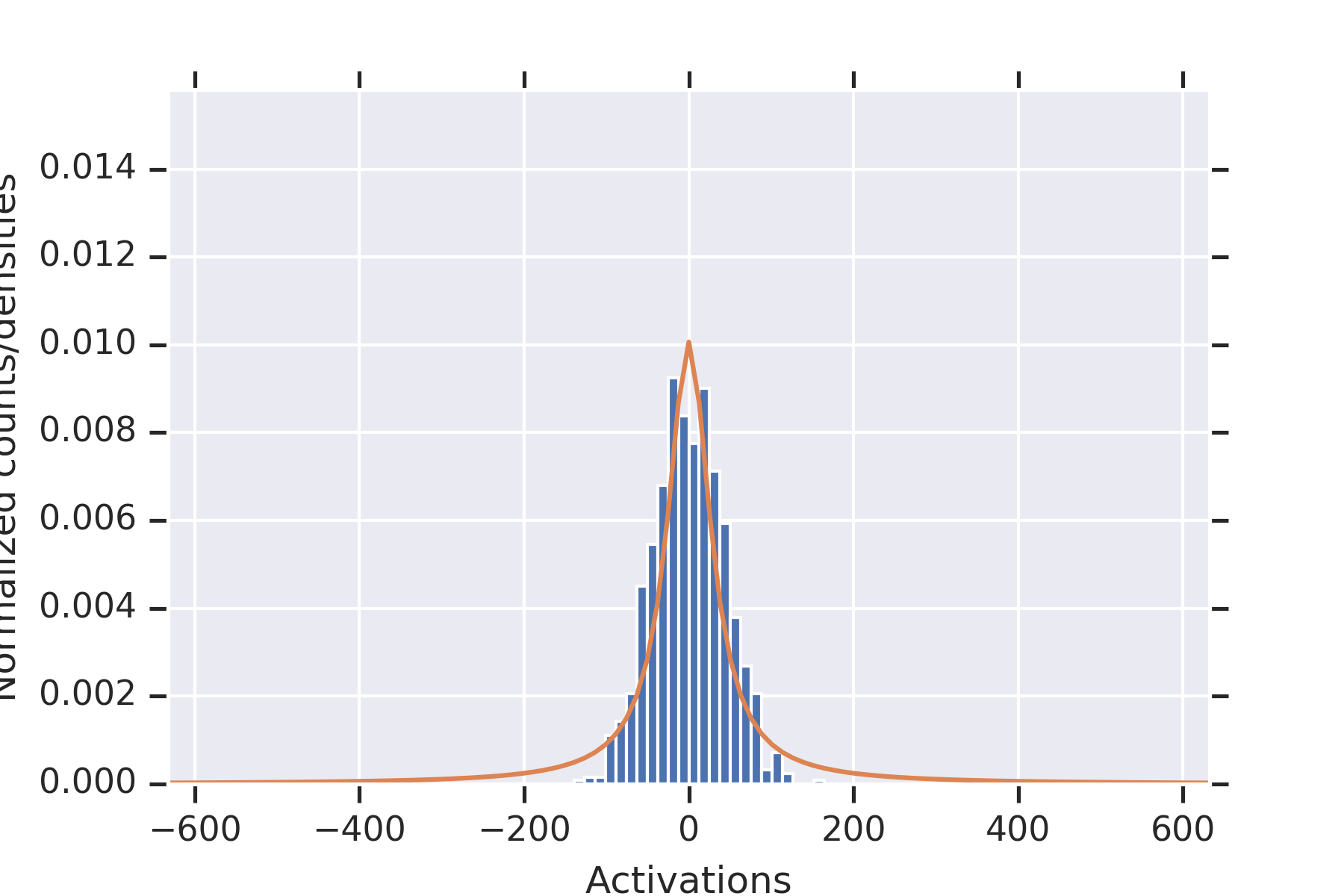}
    \end{subfigure} \\
    \begin{subfigure}[b]{0.3\textwidth}
        \includegraphics[width=\textwidth]{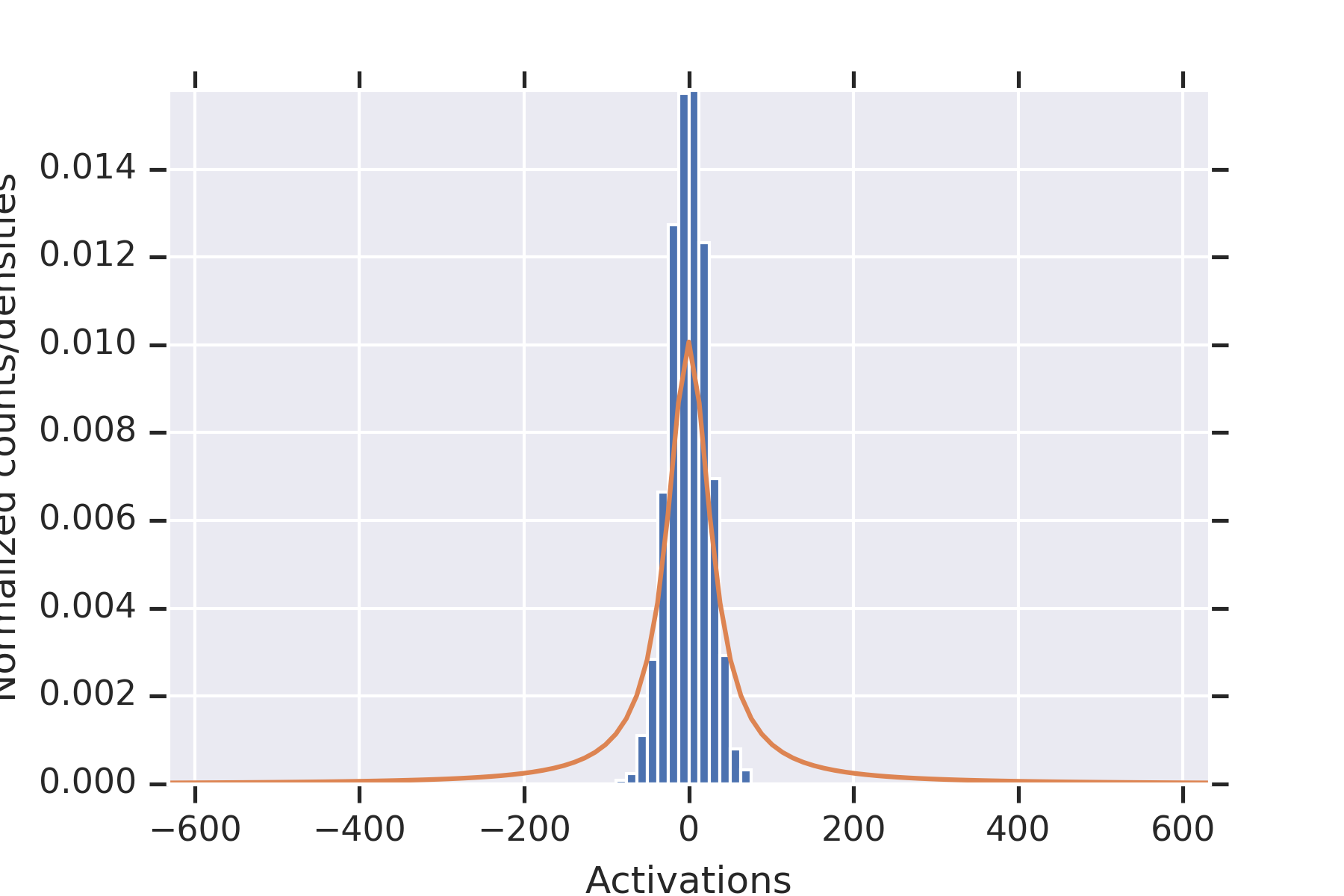}
    \end{subfigure}
    ~ 
    \begin{subfigure}[b]{0.3\textwidth}
        \includegraphics[width=\textwidth]{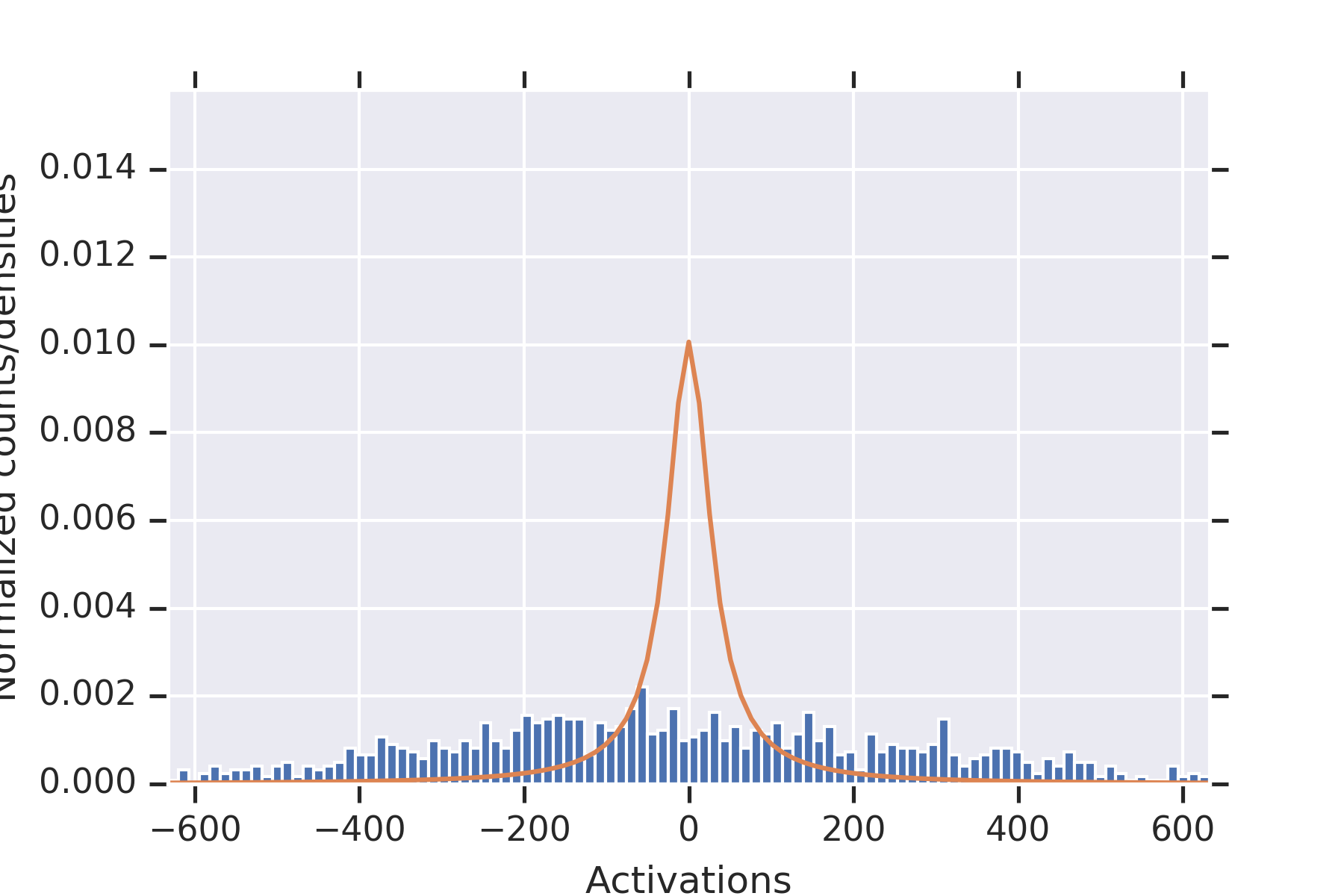}
    \end{subfigure}
    ~ 
    \begin{subfigure}[b]{0.3\textwidth}
        \includegraphics[width=\textwidth]{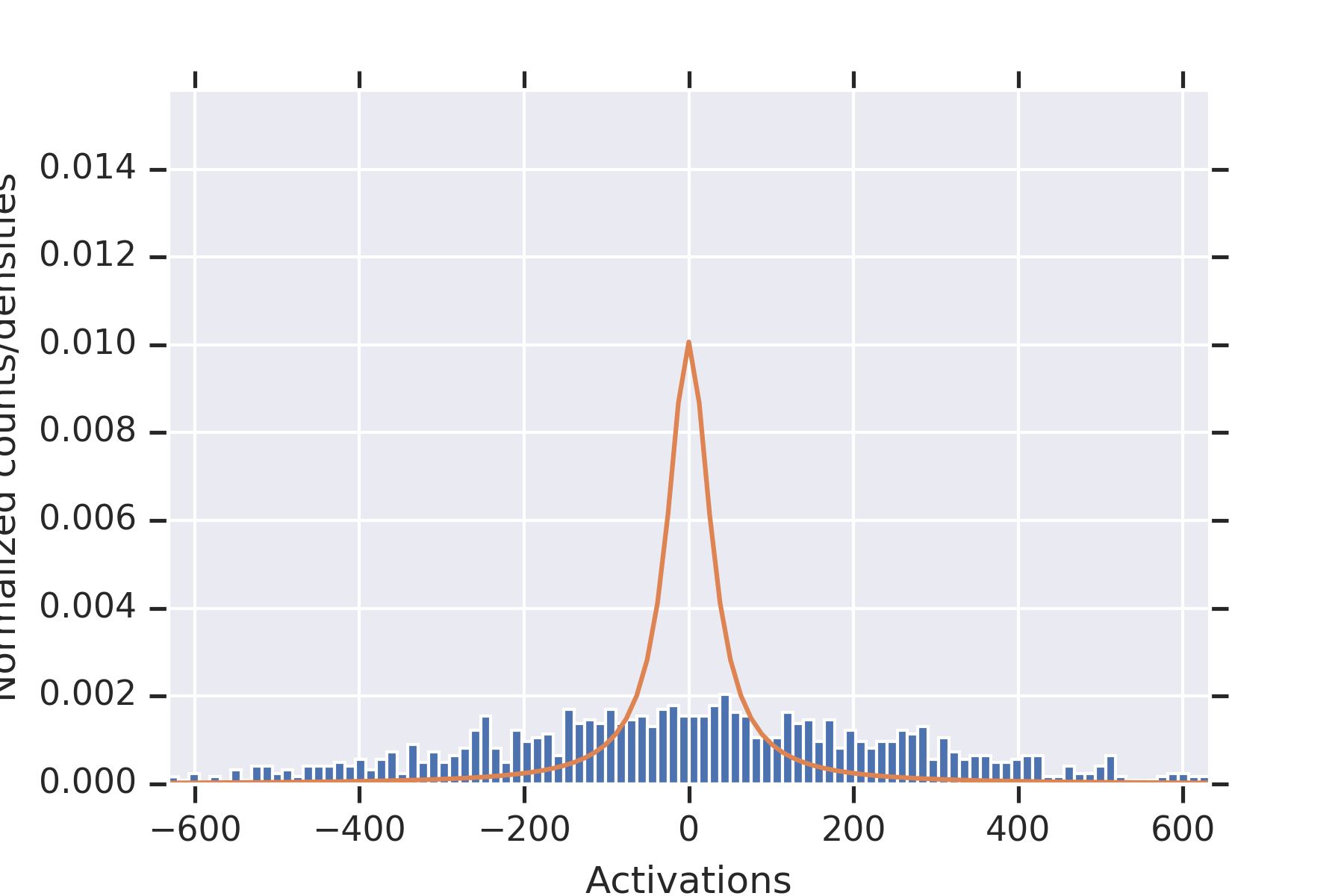}
    \end{subfigure} \\
\end{center}
    \caption{Histograms of $h[2,:]$ for nine random
             weight initializations.  Plotting activations
             separately for different random initializations
             reveals the dependence among the activations in
             a layer.}
\label{fig:separate_sample}
\end{figure}
Note that this is consistent with Theorem~\ref{t:conv.prob}
establishing convergence in probability for permissible
$\phi$, since the $\phi$ used in this experiment is not
permissible.

\appendix

\section{Proof of Lemma~\protect\ref{l:integrable}}
\label{a:integrable}

Choose $c > 0$.
Since $\mathrm{limsup}_{x \rightarrow \infty} \frac{\log | \phi(x) |}{x^2} = 0$
and $\mathrm{limsup}_{x \rightarrow -\infty} \frac{\log | \phi(x) |}{x^2} = 0$, 
we also have \\
$\mathrm{limsup}_{x \rightarrow \infty} \frac{\log | \phi(c x) |}{x^2} = 0$
and $\mathrm{limsup}_{x \rightarrow -\infty} \frac{\log | \phi(c x) |}{x^2} = 0$.
Thus, there
is an $a$ such that, for all 
$x \not\in [-a,a]$, $\log | \phi(c x) | \leq \frac{x^2}{8}$, which
implies $\phi(c x)^2 \leq \exp\left(\frac{x^2}{4}\right)$.  
Since $\phi$ is permissible, it is bounded on $[-a,a]$. Thus, we have
\begin{align*}
& \int \phi(c x)^2 \exp(-x^2/2) \;dx \\
&  = \int_{-\infty}^{-a} \phi(c x)^2 \exp(-x^2/2) dx + 
\int_{-a}^{a} \phi(c x)^2 \exp(-x^2/2) dx + 
 \int_{a}^{\infty} \phi(c x)^2 \exp(-x^2/2) dx \\
& \leq \int_{-\infty}^{-a} \exp(-x^2/4) dx + 
  \left( \sup_{x \in [-a, a]} \phi(c x)^2 \right)\int_{-a}^{a} \exp(-x^2/2) dx
 + \int_{a}^{\infty} \exp(-x^2/4) dx \\
& < \infty
\end{align*}
completing the proof.

\section{Proof of Lemma~\protect\ref{l:non_zero}}
\label{a:non_zero}

The proof is by induction.  The base case holds since
$\tq_{0} = \tr_0 = 1$.  

To prove the inductive step, we need the following lemma.
\begin{lemma}
\label{l:nonzero.implies.nonzero}
If $\phi$ is not zero a.e., then, for all $c > 0$,
$\Ebb_{z \in \Gauss(0,1)} (\phi(c z)^2) > 0$.
\end{lemma}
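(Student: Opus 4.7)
The plan is to argue by contrapositive, writing the expectation explicitly as
\[
\Ebb_{z \in \Gauss(0,1)}[\phi(cz)^2] = \frac{1}{\sqrt{2\pi}} \int_{\mathbb{R}} \phi(cz)^2 \exp(-z^2/2) \, dz,
\]
and showing that if this integral equals zero then $\phi$ must be zero a.e., contradicting the hypothesis. First I would note that the integrand is a nonnegative measurable function (using condition (c) of permissibility, which gives measurability of $\phi$ and hence of $\phi(c\cdot)^2$), so the integral is well-defined as an element of $[0, \infty]$, and ``equals zero'' is meaningful regardless of whether Lemma~\ref{l:integrable} applies. If the integral vanishes, then since the Gaussian density $\exp(-z^2/2)$ is strictly positive everywhere on $\mathbb{R}$, the product $\phi(cz)^2 \exp(-z^2/2)$ being a nonnegative function with zero Lebesgue integral forces $\phi(cz)^2 = 0$, and hence $\phi(cz) = 0$, for Lebesgue almost every $z \in \mathbb{R}$.

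Next I would translate this back into a statement about $\phi$ itself via the change of variable $u = cz$. Since $c > 0$, the map $T : z \mapsto cz$ is a homeomorphism of $\mathbb{R}$ with constant Jacobian $c$, so $T$ sends Lebesgue null sets to Lebesgue null sets in both directions. Letting $E = \{u \in \mathbb{R} : \phi(u) \neq 0\}$, we have $T^{-1}(E) = \{z : \phi(cz) \neq 0\}$, which we have just shown to be a Lebesgue null set; applying $T$ back, $E$ is also a null set. This contradicts the hypothesis that $\phi$ is not zero a.e., completing the argument.

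There is no real obstacle here; the lemma is essentially a routine measure-theoretic fact combining (i) a strictly positive weight integrated against a nonnegative function, and (ii) the fact that nonzero scalings of Lebesgue measure on $\mathbb{R}$ share the same null sets. The only subtlety worth flagging in the write-up is that ``not zero a.e.\@'' and the conclusion both implicitly require $\phi$ to be measurable, which is supplied by permissibility in the context where Lemma~\ref{l:non_zero} is applied.
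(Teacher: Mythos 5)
Your proof is correct, but it is organized differently from the paper's. You argue by contrapositive: if the expectation vanishes, then the nonnegative integrand $\phi(cz)^2 e^{-z^2/2}$ has zero integral and hence vanishes a.e., and since the Gaussian weight is strictly positive and the scaling $z \mapsto cz$ preserves null sets, $\phi$ itself must be zero a.e. The paper instead gives a direct, quantitative lower bound: writing the set $\{x : \phi^2(cx) > 0\}$ as an increasing union of the sets $\{x : \phi^2(cx) > 1/n\} \cap [-n,n]$, continuity of measure yields some $n$ for which this truncated set has positive measure, and on it the integrand is at least $\frac{1}{n} e^{-n^2/2}$, giving
\[
\Ebb_{z \in \Gauss(0,1)} (\phi(c z)^2)
 \geq
  \frac{1}{n}  e^{-n^2/2} \, \mu\left( \{ x : \phi^2(c x) > 1/n \} \cap [-n,n] \right) > 0.
\]
The two routes are, of course, two sides of the same measure-theoretic coin (``a nonnegative function integrated against a strictly positive weight is positive iff the function is nonzero on a set of positive measure''). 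Your version leans on the standard textbook fact that a nonnegative function with zero Lebesgue integral vanishes a.e., plus the observation about null sets under dilation; the paper's version is self-contained and produces an explicit (if unneeded) positive lower bound without invoking that fact. Your remark that measurability of $\phi$ (condition (c) of permissibility) is implicitly required for both the hypothesis and the conclusion to make sense is a fair and worthwhile observation. No gaps.
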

\begin{proof}
If $\mu$ is the Lebesgue measure, since
\[
\mu( \{ x \in \R : \phi^2(c x) > 0 \} )
 =
\lim_{n \rightarrow \infty}
   \mu( \{ x : \phi^2(c x) > 1/n \} \cap [-n,n] ) > 0,
\]
there exists $n$ such that 
$\mu( \{ x : \phi^2(c x) > 1/n \} \cap [-n,n] ) > 0$.  
For such an $n$, we have
\[
\Ebb_{z \in \Gauss(0,1)} (\phi(c z)^2)
 \geq 
  \frac{1}{n}  e^{-n^2/2} \mu( \{ x : \phi^2(c x) > 1/n \} \cap [-n,n] ) > 0.
\]
\end{proof}

\medskip

Returning to the proof of Lemma~\ref{l:non_zero}, by the inductive
hypothesis,
$\tr_{\ell-1} > 0$, which, since $\sigma_w > 0$, implies
$\tq_{\ell} > 0$. 
Applying Lemma~\ref{l:nonzero.implies.nonzero}
  yields $\tr_{\ell} > 0$.

\section{Proof of Lemma~\protect\ref{l:tail}}
\label{a:tail}

Since $\mathrm{limsup}_{x \rightarrow \infty} \frac{\log | \phi(x) |}{x^2} = 0$
there is an $b$ such that, for all
$x \geq b$, $\log | \phi(x) | \leq \frac{x^2}{8 s}$, which
implies
$\phi(x)^2 \leq \exp\left(\frac{x^2}{ 4 s}\right)$.  
Now, choose $q \in [r,s]$.  For $a = b /\sqrt{r}$, we then have
\begin{align*}
& \int_a^{\infty} \phi(\sqrt{q} x)^2 \exp(-x^2/2)\;dx \\
& = \frac{1}{\sqrt{q}} \int_{a \sqrt{q}}^{\infty} 
      \phi(z)^2 \exp\left(-\frac{z^2}{2 q} \right) \;dz \\
& \leq \frac{1}{\sqrt{q}} \int_{a \sqrt{q}}^{\infty} 
  \exp\left(\frac{z^2}{ 4 s}\right)
       \exp\left(-\frac{z^2}{2 q} \right) \;dz \\
& \leq \frac{1}{\sqrt{q}} \int_{a \sqrt{q}}^{\infty} 
       \exp\left(-\frac{z^2}{4 q} \right) \;dz \\
& \leq \frac{1}{\sqrt{q}} \int_{b}^{\infty} 
       \exp\left(-\frac{z^2}{4 q} \right) \;dz. \\
\end{align*}
By increasing $b$ if necessary, we can ensure
$\frac{1}{\sqrt{q}} \int_{b}^{\infty} 
       \exp\left(-\frac{z^2}{4 q} \right) \;dz
  \leq \beta$
which then gives \\
$\int_a^{\infty} \phi(\sqrt{q} x)^2 \exp(-x^2/2)\;dx
 \leq \beta$.  A symmetric argument yields
$\int_{-\infty}^a \phi(\sqrt{q} x)^2 \exp(-x^2/2)\;dx
 \leq \beta$, completing the proof.

\end{document}